\newtheorem{theorem}{Theorem}
\newtheorem{lemma}{Lemma}
\newtheorem{assumption}{Assumption}
\newtheorem{condition}{Condition}
\newtheorem{corollary}{Corollary}
\newcommand{\ie}{\textit{i.e., }}
\definecolor{Gray}{gray}{0.95}
\definecolor{DarkGray}{gray}{0.5}
\definecolor{LightCyan}{rgb}{0.88,1,1}
\definecolor{bisque}{rgb}{1.0, 0.89, 0.77}
\definecolor{blanchedalmond}{rgb}{1.0, 0.92, 0.8}
\definecolor{cosmiclatte}{rgb}{1.0, 0.97, 0.91}
\definecolor{cornsilk}{rgb}{1.0, 0.97, 0.86}
\title{FedNAR: Federated Optimization with Normalized Annealing Regularization}
\author{%
  Junbo Li$^1$, Ang Li$^2$, Chong Tian$^1$, Qirong Ho$^1$, Eric P. Xing$^{1,3,4}$, Hongyi Wang$^3$\\
  1. Mohamed bin Zayed University of Artificial Intelligence \\
  2. University of Maryland 
  3. Carnegie Mellon University
  4. Petuum, Inc.
}
\begin{document}

\maketitle

\begin{abstract}

Weight decay is a standard technique to improve generalization performance in modern deep neural network optimization, and is also widely adopted in federated learning (FL) to prevent overfitting in local clients. In this paper, we first explore the choices of weight decay and identify that weight decay value appreciably influences the convergence of existing FL algorithms. While preventing overfitting is crucial, weight decay can introduce a different optimization goal towards the global objective, which is further amplified in FL due to multiple local updates and heterogeneous data distribution.
To address this challenge, we develop {\it Federated optimization with Normalized Annealing Regularization} (FedNAR), a simple yet effective and versatile algorithmic plug-in that can be seamlessly integrated into any existing FL algorithms. Essentially, we regulate the magnitude of each update by performing co-clipping of the gradient and weight decay.
We provide a comprehensive theoretical analysis of FedNAR's convergence rate and conduct extensive experiments on both vision and language datasets with different backbone federated optimization algorithms. Our experimental results consistently demonstrate that incorporating FedNAR into existing FL algorithms leads to accelerated convergence and heightened model accuracy. Moreover, FedNAR exhibits resilience in the face of various hyperparameter configurations. Specifically, FedNAR has the ability to self-adjust the weight decay when the initial specification is not optimal, while the accuracy of traditional FL algorithms would markedly decline. Our codes are released at \href{https://github.com/ljb121002/fednar}{https://github.com/ljb121002/fednar}.

\end{abstract}

\vspace{-3mm}
\section{Introduction}
\vspace{-2mm}
FL has emerged as a privacy-preserving learning paradigm that eliminates the need for clients' private data to be transmitted beyond their local devices~\cite{mcmahan2017communication,kairouz2021advances,wang2021field}. FL presents challenges in addition to traditional distributed learning, including communication bottlenecks, heterogeneity in hardware resources and data distributions, privacy concerns, etc~\cite{li2020federated}. Numerous machine learning applications necessitate training in FL. For instance, hospitals may wish to cooperatively train a predictive healthcare model, but privacy regulations may mandate that each hospital's data remains local~\cite{rieke2020future}.


In traditional distributed optimization, local gradients are calculated for each client per iteration and subsequently sent to a global server for amalgamation. However, the communication constraints within Federated Learning (FL) entail multiple model updates prior to the aggregation phase at the global server, as exemplified by the widely-employed FedAvg framework~\cite{mcmahan2017communication}. Considering the skewed data distribution and multiple local updates for communication efficiency, it's pivotal to guard against overfitting models to local data. Algorithms like FedProx~\cite{li2020federated} and SCAFFOLD~\cite{karimireddy2020scaffold} were designed to address this issue. They build upon the FedAvg approach by integrating regularization terms during local training and employing weight decay in their local optimizers to combat overfitting on local clients. However, a frequently overlooked trade-off exists: local weight decay introduces an optimization objective that differs from the global one, an issue magnified in FL due to the numerous local updates from clients with varied data distributions. As demonstrated in Figure \ref{fig:intro}, a weight decay just slightly larger than optimal can result in an approximate 10\% drop in accuracy. This poses a compelling research question:

\begin{wrapfigure}{r}{0.3\textwidth}
  \vspace{-4mm}
  \centering
  \includegraphics[width=0.3\textwidth]{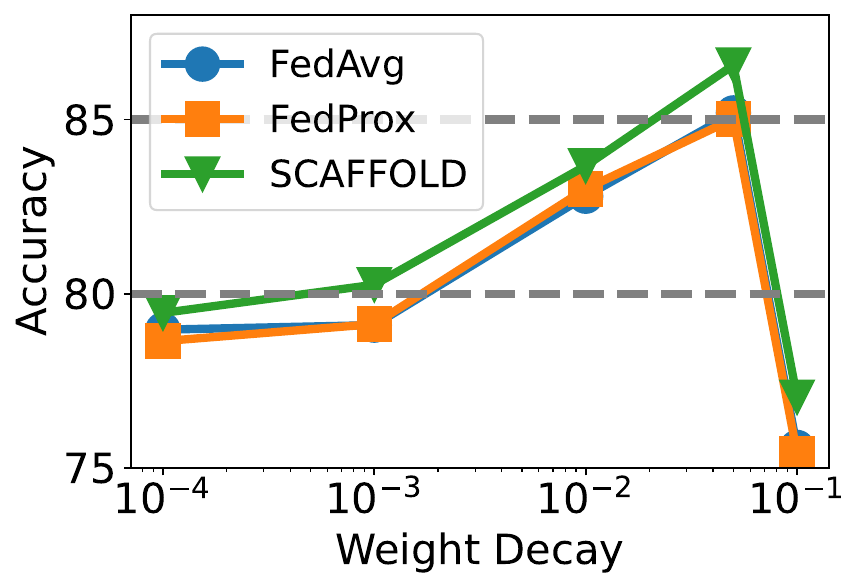}
  \vspace{-6.5mm}
  \caption{Accuracy of three FL algorithms (\ie FedAvg, FedProx, and SCAFFOLD) after 1000 rounds using different weight decay coefficients. See details in Section \ref{section:empirical}.}
  \label{fig:intro}
  \vspace{-4mm}
\end{wrapfigure}

\textit{How can we strike a balance between averting overfitting on clients using weight decay, and minimizing update divergence in FL?}

In this study, we acknowledge the significant role weight decay coefficients play in FL algorithms and undertake an exhaustive examination of their effects. Our results demonstrate that current FL algorithms are highly susceptible to variations in weight decay values. To reconcile the need to prevent overfitting with the aim to minimize update divergence, we present an adaptive weight decay strategy named {\it Federated Optimization with Normalized Annealing Regularization} (FedNAR). FedNAR is a cost-efficient approach that can be effortlessly incorporated into any prevailing FL algorithm as an algorithmic plug-in. At its core, we govern the magnitude of each weight update by limiting the norm of the sum of the gradient and weight decay.
We conduct a meticulous theoretical analysis for universally adaptive weight decay functions, proving that our FedNAR is a logically sound choice for achieving convergence from a theoretical perspective. Through an array of experiments spanning computer vision and language tasks, we establish that FedNAR fosters improved convergence and facilitates 
superior final accuracy.

\vspace{-3mm}
\paragraph{Our contributions.} More specifically, we made the following contributions in this work to the entire federated learning and optimization community
\begin{itemize}
    \vspace{-1mm}
    \item We empirically observed that FL settings make weight decay more crucial in training.
    \vspace{-1mm}
    \item Based on the observation described above, we designed FedNAR, an algorithmic plug-in to control the magnitude of weight decay that is compatible with all existing FL algorithms.
    \vspace{-1mm}
    \item We propose the first theoretical framework of FL that takes weight decay into account and show that FedNAR is guaranteed to converge under FL scenarios.
    \vspace{-1mm}
    \item We conduct extensive experiments on vision and language tasks by plugging FedNAR into state-of-the-art baselines, which demonstrate that adopting FedNAR generally leads to faster training and higher accuracy.
\end{itemize}

\vspace{-3mm}
\subsection{Related work}
\vspace{-2mm}
FL is a paradigm for collaborative learning with decentralized private data~\citep{konevcny2016federated,mcmahan2017communication,li2020federated,kairouz2021advances,wang2021field}. Standard approach to FL tackles it as a distributed optimization problem where the global objective is defined by a weighted combination of clients' local objectives~\citep{mohri2019agnostic,li2020federated,reddi2020adaptive,wang2020tackling}. Theoretical analysis has demonstrated that federated optimization exhibits convergence guarantees but only under certain conditions, such as a bounded number of local epochs~\citep{li2020convergence}. Other work has tried to improve the averaging-based aggregations~\cite{yurochkin2019bayesian,wang2020federated}. 
Techniques such as secure aggregation~\citep{bonawitz2017practical,bonawitz2019towards,he2020fedml} and differential privacy~\citep{sun2019can,mcmahan2018learning} have been widely adopted to further improve privacy in FL~\citep{fredrikson2015model}. Our FedNAR is based on the standard FedAvg \cite{mcmahan2017communication} framework and can be adapted to different settings.

In order to enhance convergence in a communication-efficient framework, addressing the challenges posed by highly imbalanced data, we highlight the significance of weight decay, which is crucial in modern deep neural network (DNN) training. Extensive research in centralized training has explored various methods to incorporate weight decay into optimization algorithms, such as AdamW \cite{loshchilov2017decoupled} and WIN \cite{zhou2022win}. However, the integration of weight decay into FL remains relatively unexplored, both in terms of theoretical understanding and practical implementation. A recent study by \cite{li2023revisiting} delved into the examination of weight decay during server updates in the context of FL. Nevertheless, their focus was primarily on tuning the global weight decay parameter, neglecting the exploration of weight decay in local updates. Additionally, their choice of the global weight decay parameter relied on a proxy dataset, which is unrealistic and lacks theoretical guarantees.



\vspace{-3mm}
\section{Preliminaries: federated optimization formulation}
\vspace{-2mm}
Our objective is to minimize the loss function $F(x)$ of a model $x$, which is trained on data dispersed across $M$ clients, as denoted by:
\vspace{-2mm}
\begin{align*}
x^*=\arg\min_{x}F(x)=\arg\min_{x}\sum_{i=1}^MF_i(x),    
\end{align*} \\[-2mm]
where $F_i(x)$ corresponds to the loss function calculated using the data located on the $i$-th client. Federated optimization introduces two key constraints: privacy and communication efficiency. The privacy constraint ensures that clients retain their data without sharing, whereas the communication constraint emerges due to the substantial cost and restricted capacity of network bandwidth. 
As a result, only a fraction of clients can engage in each communication round with the server in practice.
To circumvent these constraints, FedAvg has been proposed as a prevalent communication-efficient FL framework \cite{mcmahan2017communication}. For a round $1\leq t\leq T$, we show an illustration of FedAvg in Algorithm \ref{alg:fedavg}.



\vspace{-2mm}
\begin{algorithm}
\caption{Round $t$ of FedAvg}
\label{alg:fedavg}
{\small
\begin{algorithmic}[1]
\REQUIRE Global model $x^{t-1}$, learning rate $\lambda_l, \lambda_g$, number of local updates $\tau$.
\FOR{Client $i=1,\cdots, M$}
    \STATE Let $x_0^{t,i} = x^{t-1}$.
    Update $x_{k}^{t,i} = x_{k-1}^{t,i} - \lambda_l\nabla F_i(x_{k-1}^{t,i})$ for $k=1,\cdots, \tau$.
    \STATE Send local update $\Delta^{t,i}=x_0^{t,i} - x_\tau^{t,i}$ back to the server.
\ENDFOR
\STATE Aggregate local updates:
$\Delta^t=\frac{1}{M}\sum_{i=1}^{M}\Delta^{t,i}$, and update global model:
$x^t = x^{t-1} - \lambda_g\Delta^t.$
\end{algorithmic}}
\end{algorithm}
\vspace{-2mm}

In the original FedAvg formulation, local updates on each client are realized through (stochastic) gradient descent (SGD/GD) with the loss function $F_i(x)$, identical to the global objective. On the FedAvg foundation, we can further adjust the local loss function to be $F_i(x)+\zeta_i(x)$, where $\zeta_i(x)$ serves as a regularization term, enabling faster convergence in the federated learning context. For instance, ``client drift'', a common problem in federated learning due to multiple local updates preceding aggregation, can be mitigated by adding a proximal term. FedProx \cite{li2020federated} employs a local loss $F_i(x)+\frac{\mu}{2}\|x-x_0\|^2$ that incorporates an additional proximal term $\zeta_i(x)=\frac{\mu}{2}\|x-x_0\|^2$ to counteract overfitting in local clients. Similarly, SCAFFOLD \cite{karimireddy2020scaffold} introduces local control variables $c_i$ and a global control variable $c$ and modifies the local gradient to be $\nabla F_i(x) -c_i + c$. Thus, $\zeta_i(x)$ equates to $(c-c_i)^Tx$.


\vspace{-3mm}
\section{An empirical study of weight decay}
\vspace{-2mm}
\label{section:empirical}
Weight decay is a well-established approach in standard centralized deep neural network training, primarily used to prevent overfitting \cite{hanson1988comparing,loshchilov2017decoupled}. In the realm of FL, weight decay values generally mirror those employed in centralized training. In this section, we meticulously explore the impact of weight decay on federated optimization. In addition to the original FedAvg, we investigate various methodologies including \{FedProx, SCAFFOLD\}, which adjust local updates, and \{FedAvgm, FedAdam\}~\cite{reddi2020adaptive}, which utilize SGD with momentum and Adam, respectively, in the global update while treating aggregated local updates as pseudo-gradient. We also assess FedExP~\cite{jhunjhunwala2023fedexp}, which adaptively modulates the global learning rate.

We evaluate all the aforementioned algorithms on the CIFAR-10 dataset, partitioned among 100 clients under three different settings. Following \cite{hsu2019measuring}, in each setting, we initially sample a class distribution $\mathbf{p}_i\sim \text{Dirichlet}(\alpha, \dots, \alpha)$ for each client $i$, and subsequently sample training data in accordance with the distributions $\mathbf{p}_i$. When the value of $\alpha$ is low, the sampled individual probabilities ${\mathbf{p}_i}$ exhibit higher concentration, whereas for greater values of $\alpha$, the concentration diminishes. Consequently, a larger value of $\alpha$ corresponds to a more balanced data distribution across clients.

We train a ResNet-18 for each algorithm on CIFAR-10 for our empirical study. Figure~\ref{fig:wd} presents the test accuracy plotted against weight decay values from \{$10^{-4}, 10^{-3}, 10^{-2}, 5\times 10^{-2}, 10^{-1}$\} for each setting. The $y$-axis range is maintained consistently between (50, 95) across all settings to facilitate a clear comparative analysis of the influence of weight decay under varied settings.

\begin{figure}[!h]
    \vspace{-1.5mm}
    \centering
    \includegraphics[width=0.9\textwidth]{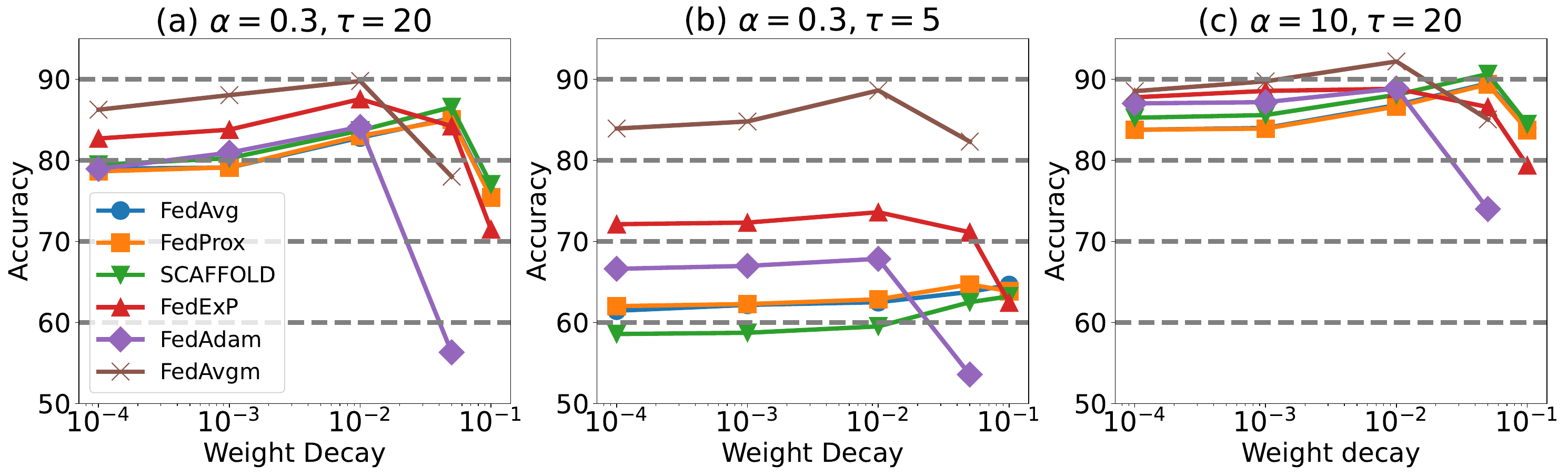}
    \vspace{-1.5mm}
    \caption{Influence of weight decay for different settings. We train each algorithm (\ie FedAvg, FedProx, SCAFFOLD, FedExp, FedAdam, FedAvgm) over 1000 rounds with $\tau$ local steps per round, a local learning rate of 0.01, and a global learning rate of 1.0. We apply a decay for the local learning rate of 0.998 per round and a gradient clipping of max norm 10 as per \cite{jhunjhunwala2023fedexp}. Given that multiple local steps and imbalanced data distribution are two distinguishing features of FL, we utilize various pairs of $(\alpha, \tau)$ to observe their influence on the results. The baseline setting is chosen to be $(\alpha,\tau)=(0.3,20)$, resulting in a highly imbalanced data distribution. The second configuration reduces the number of local updates to $(\alpha,\tau)=(0.3,5)$. The third configuration employs a balanced data distribution with $(\alpha,\tau)=(10,20)$.}
    \label{fig:wd}
    \vspace{-6mm}
\end{figure}

In the baseline setting (a), the influence of weight decay on the performance of each algorithm is pronounced in two primary ways. Firstly, a noteworthy enhancement in accuracy exceeding $6\%$ is observed when leveraging optimal weight decay as compared to sub-optimal weight decay. This improvement outstrips the differences discerned across various algorithms. Secondly, increasing weight decay to be slightly larger than the optimal value, such as from 0.05 to 0.1 or from 0.01 to 0.05, can cause a considerable plunge in accuracy exceeding $10\%$ and even verging on $30\%$. This underscores the phenomenon of amplified update deviation in FL. However, both of these elements are tempered when either the number of local updates is curtailed in (b) or a balanced data distribution is deployed in (c). In these scenarios, the accuracy curve corresponding to weight decay is more leveled for each algorithm, and the dip in accuracy is significantly reduced when weight decay is slightly elevated from the optimal value.

This finding underscores how the training paradigm in FL can modulate the impact of weight decay. However, prior work has largely overlooked the role of weight decay, both theoretically and experimentally. Convergence analyses in prior studies did not take into account the influence of weight decay and assigned its value as 0. Moreover, previous experiments either adopted values akin to those utilized in centralized training, or fine-tuned it within a restricted range, or simply set it to 0.

\vspace{-3mm}
\section{FedNAR and its convergence analysis}
\vspace{-2mm}
Driven by the recognized shortcomings of previous studies, we introduce the first analytical framework for convergence that incorporates local weight decay, examining its resulting deviation and how it impacts global updates. Following this theoretical exploration, we propose our FedNAR algorithm. Designed with the dual aims of ensuring a theoretical guarantee and achieving an optimal rate of convergence, FedNAR marks a significant advancement in this field.

\vspace{-3mm}
\subsection{A general federated optimization framework}
\vspace{-2mm}
Our theoretical analysis can cover adaptive learning rate and weight decay. Say the model is of dimension $d$. We first define two functions $\lambda, \mu: \mathbb{N}\times\mathbb{R}^d\times\mathbb{R}^d\rightarrow\mathbb{R}^+$, and for $t\geq 0$, we denote $\lambda_t=\lambda(t, \cdot, \cdot), \mu_t=\mu(t, \cdot, \cdot)$ as mappings with the domain of definition on the gradient space and weight space. These two functions stand for learning rate and weight decay respectively. This framework can be adapted to any FL algorithms by making the learning rate and weight decay to be adaptive. We present the framework in Algorithm \ref{alg:fednar}. This framework holds for arbitrary functions $\lambda$ and $\mu$. We will clarify the choices of $\lambda$ and $\mu$ in FedNAR later in equation \ref{eq:lambda} and \ref{eq:mu} after some theoretical analysis, and make clear where the ``Normalized Annealing Regularization" in FedNAR comes from.
In the following, we denote $f_{k}^{t,i}(x)$ to be the loss function computed on a stochastic batch instead of $F_i(x)$.

\begin{algorithm}
\caption{FedNAR}
\label{alg:fednar}
{\small
\begin{algorithmic}[1]
\REQUIRE Functions $\lambda_t, \mu_t, F_i$, constant $\lambda_g, T, \tau >0$, sets $C_t$.
\FOR{Communication rounds $t=1,\cdots, T$}
    \STATE Global server sends model $x^{t-1}$ to participated clients in round $t$.
    \FOR{Client $i\in C_t$}
        \STATE Let $x_0^{t,i} = x^{t-1}$.
        \FOR{Iteration $k=1, \cdots, \tau$}
            \STATE Compute (stochastic) gradient: $g_{k-1}^{t,i}=\nabla f_{k-1}^{t, i}(x_{k-1}^{t,i}).$ \label{line: grad}
            \STATE \colorbox{red!20}{Compute adaptive learning rate and weight decay:}\\
            \colorbox{red!20}{%
            \parbox{\dimexpr\linewidth-2\fboxsep}{%
            $$\lambda_{k-1}^{t,i}=\lambda_t(g_{k-1}^{t,i}, x_{k-1}^{t,i}),\quad \mu_{k-1}^{t,i}=\mu_t(g_{k-1}^{t,i}, x_{k-1}^{t,i}).$$}
              }%
            \STATE \colorbox{red!20}{Update $x_{k}^{t,i} = (1-\mu_{k-1}^{t,i})x_{k-1}^{t,i} - \lambda_{k-1}^{t,i}g_{k-1}^{t,i}.$}
        \ENDFOR
        \STATE Send local update $\Delta^{t,i}=x_0^{t,i} - x_\tau^{t,i}$ back to the server.
    \ENDFOR
    \STATE Aggregate local updates:
    $\Delta^t=\frac{1}{|C_t|}\sum_{i=1}^{|C_t|}\Delta^{t,i}, $
    and update global model:
    $x^t = x^{t-1} - \lambda_g\Delta^t.$
\ENDFOR
\end{algorithmic}}
\end{algorithm}
\vspace{-2mm}

Algorithm \ref{alg:fednar} is a generalized version of FedAvg that incorporates adaptive local learning rate and adaptive weight decay. We can recover the original FedAvg by setting $\lambda_t(g,x)$ and $\mu_t(g,x)$ to be constant functions. It is worth emphasizing that although most previous works like FedAvg and FedProx employed local weight decay, they did not integrate it into their theoretical analysis. However, our empirical investigations in Section \ref{section:empirical} demonstrate that weight decay could significantly affect the convergence of federated optimization.
Therefore, we argue that thorough empirical and theoretical analysis is needed to better understand this term.

\vspace{-3mm}
\subsection{Convergence analysis and FedNAR}
\vspace{-2mm}
To initiate the theoretical analysis, we present three standard assumptions that are widely used in federated optimization literature \cite{jhunjhunwala2023fedexp,wang2020tackling}. The initial two are conventional assumptions utilized in the general analysis of SGD optimizer, while the third one is common in FL theory.

\begin{assumption}[Smooth loss functions] \label{assum:smooth}
    Every local loss function is $L$-smooth, \ie there exists $L>0$ such that for any client $1\leq i\leq M$, the gradient satisfies $\|\nabla F_i(x)-\nabla F_i(y)\|\leq L\|x-y\|.$
\end{assumption}

\begin{assumption}[Unbiased gradient and bounded variance]\label{assum:unbias}
    For any client $1\leq i\leq M$, the stochastic gradient is an unbiased estimator of the full-batch gradient, \ie $\mathbb{E}[\nabla f_k^{t,i}(x)]=\nabla F_i(x)$, and there exists $\sigma^2>0$, such that for any $1\leq i\leq M$, $\mathbb{E}[\|\nabla f_k^{t,i}(x)-\nabla F_i(x)\|^2]\leq \sigma^2$.
\end{assumption}

\begin{assumption}[Bounded heterogeneity]\label{assum:hetero}
    There exists $\sigma_g^2>0$, such that $\frac{1}{M}\sum_{i=1}^M\|\nabla F_i(x)-\nabla F(x)\|^2 \leq \sigma_g^2$ holds for any $x\in\mathbb{R}^d$. 
\end{assumption}

Our findings suggest that certain limitations on $\lambda_t$ and $\mu_t$ are necessary to ensure the convergence of FedNAR. We begin by presenting a lemma that demonstrates the impact of local adaptive weight decay and gradients on the global update process.\footnote{Here we assume full-client participation, following previous works \cite{wang2020tackling, jhunjhunwala2023fedexp}. However, our analysis can be easily extended to partial-client participation, as demonstrated in \cite{wang2020tackling}.}

\begin{lemma}
For round $t\geq 1$, the global update is equivalent to be
$$x^t=(1-\mu_g^{t-1})x^{t-1}-\lambda_g h^t,$$
where
\vspace{-2mm}
{\small
\begin{align}
    \mu_g^{t-1}:=\lambda_g - \frac{\lambda_g}{M}\sum_{i=1}^M\prod_{j=0}^{\tau-1}(1-\mu_j^{t,i}), \quad
    h^t := \frac{1}{M}\sum_{i=1}^M\sum_{j=0}^{\tau-1}\lambda_j^{t,i}\prod_{r=j}^{\tau-1}(1-\mu_r^{t,i})g_j^{t,i}.
\end{align}\\[-6mm]
}
\label{lemma:wd}
\end{lemma}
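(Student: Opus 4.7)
The plan is to prove the lemma by directly unrolling the local recursion in closed form and then substituting into the server aggregation rule. The key observation is that each local step is an affine map $x \mapsto (1-\mu)x - \lambda g$, so composing $\tau$ such maps gives a clean formula in which the initial iterate $x_0^{t,i}=x^{t-1}$ is multiplied by the product of shrinkage factors, while each gradient $g_j^{t,i}$ is scaled by $\lambda_j^{t,i}$ together with the product of the shrinkage factors applied \emph{after} step $j$.

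First I would show by induction on $k$ that
\begin{equation*}
x_k^{t,i} \;=\; \Bigl(\prod_{j=0}^{k-1}(1-\mu_j^{t,i})\Bigr)\,x_0^{t,i} \;-\; \sum_{j=0}^{k-1} \lambda_j^{t,i}\Bigl(\prod_{r=j+1}^{k-1}(1-\mu_r^{t,i})\Bigr) g_j^{t,i},
\end{equation*}
with the usual convention that an empty product equals $1$. The base case $k=1$ is immediate from line 8 of Algorithm~\ref{alg:fednar}, and the inductive step follows by plugging the formula for $x_{k-1}^{t,i}$ into $x_k^{t,i}=(1-\mu_{k-1}^{t,i})x_{k-1}^{t,i}-\lambda_{k-1}^{t,i}g_{k-1}^{t,i}$ and absorbing the factor $(1-\mu_{k-1}^{t,i})$ into the existing products.

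Next I would specialize to $k=\tau$ and use $x_0^{t,i}=x^{t-1}$ to express the local delta
\begin{equation*}
\Delta^{t,i} \;=\; x_0^{t,i}-x_\tau^{t,i} \;=\; \Bigl(1-\prod_{j=0}^{\tau-1}(1-\mu_j^{t,i})\Bigr)x^{t-1} \;+\; \sum_{j=0}^{\tau-1}\lambda_j^{t,i}\Bigl(\prod_{r=j+1}^{\tau-1}(1-\mu_r^{t,i})\Bigr) g_j^{t,i}.
\end{equation*}
Averaging over $i=1,\dots,M$ (I will stay with full participation as the footnote allows) and substituting into the server update $x^t=x^{t-1}-\lambda_g\Delta^t$, the coefficient of $x^{t-1}$ becomes $1-\lambda_g+\tfrac{\lambda_g}{M}\sum_i\prod_{j=0}^{\tau-1}(1-\mu_j^{t,i}) = 1-\mu_g^{t-1}$, and the gradient-dependent remainder is exactly $-\lambda_g h^t$, matching the stated decomposition.

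I do not expect a real obstacle here, since the argument is essentially bookkeeping: the only care needed is to keep the indices on the telescoping product aligned (each $g_j^{t,i}$ picks up shrinkage only from the steps strictly after it) and to correctly split the $\lambda_g$ server learning rate between the $x^{t-1}$ term (which becomes the effective global weight decay $\mu_g^{t-1}$) and the averaged gradient term $h^t$. No assumption from Assumptions~\ref{assum:smooth}--\ref{assum:hetero} is needed for this identity; it is a purely algebraic rewriting of Algorithm~\ref{alg:fednar} that will later be the starting point for the convergence analysis via control of $\mu_g^{t-1}$ and $h^t$.
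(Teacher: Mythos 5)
Your proposal is correct and follows essentially the same route as the paper's proof: unroll the affine local recursion by induction, specialize at $k=\tau$, form $\Delta^{t,i}$, average over clients, and substitute into the server step so that $\lambda_g$ splits between the $x^{t-1}$ coefficient (giving $\mu_g^{t-1}$) and the gradient term (giving $\lambda_g h^t$). One point worth flagging: your careful induction yields the coefficient $\lambda_j^{t,i}\prod_{r=j+1}^{\tau-1}(1-\mu_r^{t,i})$ for $g_j^{t,i}$, i.e., shrinkage only from steps \emph{strictly after} $j$, which is the correct closed form (for $j=\tau-1$ the coefficient must be $\lambda_{\tau-1}^{t,i}$ alone); the lemma as stated, and the paper's own unrolling, write $\prod_{r=j}^{\tau-1}$, an off-by-one in the lower limit of the product. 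Your indexing is the right one; the discrepancy is harmless for the subsequent bounds since each extra factor $(1-\mu_r^{t,i})\le 1$, but your version is the identity that actually holds.
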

We show the full proof in Appendix \ref{section: proof}. Lemma \ref{lemma:wd} highlights that the additional weight decay in local updates introduces a deviation $\mu_g^{t-1}x^{t-1}$ in the global update. Previous works on federated optimization \cite{li2020federated, jhunjhunwala2023fedexp, karimireddy2020scaffold} did not include weight decay in their theoretical analysis, leading to $\mu_j^{t,i}$ always being zero, and $\mu_g^{t-1}=\lambda_g-\frac{\lambda_g}{M}\cdot M=0$. Moreover, we can see that larger $\tau$ further intensifies the deviation, since $\mu_g^{t-1}=\lambda_g(1-\frac{1}{M}\sum_{i=1}^M\prod_{j=0}^{\tau-1}(1-\mu_j^{t,i}))$, and larger $\tau$ implies that $\mu_g^{t-1}$ increases and approaches $\lambda_g$. This highlights that the communication-efficient training framework of FL amplifies the impact of weight decay.

So to achieve convergence with local weight decay, it is necessary to regulate this deviation $\mu_g^tx^t$. Therefore, we present the following result to control the norm of the global model, \ie $\|x^t\|$, as the first step.

\begin{theorem}
If the functions $\lambda_t$ and $\mu_t$ satisfy that there exists $A>0$, such that for any $t\geq 0$, and $(g,x)\in\mathbb{R}^d\times\mathbb{R}^d$, $\|\lambda_t(g, x)g + \mu_t(g,x)x\|\leq A$, then the norm of global model in Algorithm \ref{alg:fednar} is at most polynomial increasing with respect to $t$, \ie $\|x_t\|\leq \mathcal{O}(poly(t))$.
\label{theorem:poly}
\end{theorem}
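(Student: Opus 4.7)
The plan is to unfold the algorithm's update one level at a time, bound each step by the constant $A$ using the hypothesis, and then telescope. Concretely, the local update in line 8 of Algorithm~\ref{alg:fednar} can be rewritten as
\[
x_k^{t,i} - x_{k-1}^{t,i} = -\bigl(\lambda_{k-1}^{t,i}\,g_{k-1}^{t,i} + \mu_{k-1}^{t,i}\,x_{k-1}^{t,i}\bigr),
\]
and by the assumption $\|\lambda_t(g,x)g+\mu_t(g,x)x\|\le A$ applied with $g=g_{k-1}^{t,i}$ and $x=x_{k-1}^{t,i}$, each local step moves the iterate by at most $A$ in norm.

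Next, I would iterate this bound over the $\tau$ inner steps and apply the triangle inequality to obtain
\[
\|x_\tau^{t,i}\| \le \|x_0^{t,i}\| + \tau A = \|x^{t-1}\| + \tau A,
\qquad
\|\Delta^{t,i}\| = \|x_0^{t,i} - x_\tau^{t,i}\| \le \tau A.
\]
Averaging over the participating clients immediately yields $\|\Delta^t\|\le \tau A$, so the global update gives
\[
\|x^t\| \le \|x^{t-1}\| + \lambda_g \tau A.
\]

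Finally, a trivial induction on $t$ yields $\|x^t\| \le \|x^0\| + t\,\lambda_g\,\tau\,A$, which is $\mathcal{O}(t)$ and hence at most polynomial, proving the theorem. There is no real obstacle here beyond carefully invoking the hypothesis at every inner iteration: the assumption was designed precisely so that the per-step displacement is absolutely bounded, and because the $\tau$ inner displacements and the $T$ outer displacements simply telescope through the triangle inequality, the final growth rate is in fact linear in $t$ rather than super-linear. The proof does not need any smoothness, unbiasedness, or heterogeneity assumption; it is purely a consequence of the boundedness hypothesis on $\lambda_t g + \mu_t x$.
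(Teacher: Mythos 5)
Your proof is correct and follows essentially the same argument as the paper's: both rewrite each local step as subtracting $\lambda_{k-1}^{t,i}g_{k-1}^{t,i}+\mu_{k-1}^{t,i}x_{k-1}^{t,i}$, bound this displacement by $A$ via the hypothesis, and telescope through the triangle inequality to get $\|x^t\|\le\|x^0\|+\lambda_g\tau A t$. The only cosmetic difference is that you phrase the final step as an induction on $t$ while the paper unrolls the recursion to $x^0$ in one line.
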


We want to emphasize that the condition required for $\lambda$ and $\mu$ to achieve polynomial increasing speed in Theorem \ref{theorem:poly} is practical and commonly used in real-world training. Many previous works, such as \cite{acar2021federated,jhunjhunwala2023fedexp}, adopt gradient scaling based on norm, which bounds the term $\|\lambda_t(g,x) g\|$ by a constant related to the clipping value. Similarly, we can bound $\|\mu_t(g, x)x\|$ using norm clipping, for example, by setting $\mu_t(g,x)=\mu\min\{1, c/\|x\|\}$ for some constant $c$ and $\mu$. Therefore, the condition is easily satisfied from a practical standpoint.

Finally, we give the convergence result for FedNAR in the following Theorem.

\begin{theorem}
    Under Assumptions \ref{assum:smooth}, \ref{assum:unbias} and \ref{assum:hetero}, if $\lambda_t$ and $\mu_t$ satisfy the condition in Theorem \ref{theorem:poly}, and also if there exists $\{u_t\}$ with an exponential decay speed, such that for any $t\geq0$, and any $g,x\in\mathbb{R}^d, \mu_t(g,x)\leq u_t$, then the global model $\{x^t\}$ satisfies:
    $$\min_{1\leq t\leq T}\mathbb{E}\|\nabla F(x^t)\|^2\leq \mathcal{O}\left(\frac{1}{T}\right) + \mathcal{O}\left(\underbrace{L\lambda_g^2\tau^2C^2}_{\text{weight decay error}} + \underbrace{L\lambda_g^2\tau l_*^2(\tau\sigma^2+L^2\tau^2A^3+\sigma_g^2)}_{\text{client drift and data heterogeneity error}}\right),$$
    where $l_*=\max_t\{l_t\}$, with $l_t=\max_{g,x}\{\lambda_t(g,x)\}$, and $C$ satisfies $u_t\|x^t\|, u_t\|x^t\|^2\leq C$ for any $t$.
    \label{theorem:main}
\end{theorem}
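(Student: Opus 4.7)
}
The plan is to treat this as a standard smoothness-plus-descent argument on the global iterates, but carefully separating the ``weight decay deviation'' $\mu_g^{t-1}x^{t-1}$ from the ``gradient progress'' $\lambda_g h^t$ using Lemma \ref{lemma:wd}. Starting from $L$-smoothness of $F$ (Assumption \ref{assum:smooth}) and substituting the global update $x^t - x^{t-1} = -\mu_g^{t-1}x^{t-1} - \lambda_g h^t$, I would expand
\[
F(x^t) \le F(x^{t-1}) - \bigl\langle \nabla F(x^{t-1}),\, \mu_g^{t-1}x^{t-1} + \lambda_g h^t \bigr\rangle + \tfrac{L}{2}\bigl\|\mu_g^{t-1}x^{t-1} + \lambda_g h^t\bigr\|^2,
\]
take expectation, and analyze the inner-product term by splitting it into a ``good'' piece (which will produce $-\lambda_g\tau\|\nabla F(x^{t-1})\|^2$ up to constants, i.e.\ true descent) and a collection of error pieces to be absorbed. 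The quadratic piece will be bounded by $2L\mu_g^{t-1,2}\|x^{t-1}\|^2 + 2L\lambda_g^2\|h^t\|^2$ and contributes to the two stated error groups.

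For the gradient progress term, I would write $h^t$ as $\frac{1}{M}\sum_i\sum_j \lambda_j^{t,i}\prod_{r\ge j}(1-\mu_r^{t,i})\,g_j^{t,i}$ (exactly as in Lemma \ref{lemma:wd}) and decompose each $g_j^{t,i}$ around $\nabla F(x^{t-1})$ via the identity $g_j^{t,i} = \nabla F(x^{t-1}) + (\nabla F_i(x_j^{t,i}) - \nabla F_i(x^{t-1})) + (\nabla F_i(x^{t-1}) - \nabla F(x^{t-1})) + (g_j^{t,i} - \nabla F_i(x_j^{t,i}))$. The first term yields the descent, while the other three are controlled respectively by: (i) $L$-smoothness applied to the client-drift bound $\|x_j^{t,i} - x^{t-1}\|\le jA$ that follows from the hypothesis $\|\lambda_t g + \mu_t x\|\le A$ of Theorem \ref{theorem:poly}, giving the $L^2\tau^2 A^3$ factor after one further application of $\|\cdot\|\le \tau A$; (ii) heterogeneity (Assumption \ref{assum:hetero}) producing $\sigma_g^2$; (iii) unbiased noise (Assumption \ref{assum:unbias}) producing $\tau\sigma^2$ (with the $\tau$ coming from sums of independent per-step variances). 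The factor $l_*^2$ arises because each coefficient $\lambda_j^{t,i}\prod_r(1-\mu_r^{t,i})$ is crudely $\le l_*$.

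The weight-decay deviation term is where the hypotheses on $u_t$ and $C$ come in. Using $\mu_j^{t,i}\le u_t$ and the elementary inequality $1-\prod_{j=0}^{\tau-1}(1-\mu_j^{t,i})\le \sum_j \mu_j^{t,i}\le \tau u_t$, I get $\mu_g^{t-1}\le \lambda_g\tau u_t$. Combined with $u_t\|x^{t-1}\|\le C$ and $u_t\|x^{t-1}\|^2\le C$, both the linear inner-product contribution $\mu_g^{t-1}\|x^{t-1}\|\cdot\|\nabla F(x^{t-1})\|$ (which I split via Young's inequality to free up $\|\nabla F(x^{t-1})\|^2$ for absorption) and the quadratic contribution $\mu_g^{t-1,2}\|x^{t-1}\|^2 \le \lambda_g^2\tau^2 u_t\cdot u_t\|x^{t-1}\|^2 \le \lambda_g^2\tau^2 C\cdot u_t$ become controllable; the exponential decay of $u_t$ is what makes the per-round weight-decay deviation summable, while the worst-case constant $C$ (which exists precisely because $u_t$ is exponential and $\|x^t\|$ is polynomial by Theorem \ref{theorem:poly}) is what shows up in the final rate as $L\lambda_g^2\tau^2 C^2$.

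Finally, I would rearrange to obtain $c\,\lambda_g\tau\,\mathbb{E}\|\nabla F(x^{t-1})\|^2 \le \mathbb{E}[F(x^{t-1})-F(x^t)] + \text{errors}_t$, sum for $t=1,\ldots,T$, telescope the function-value differences into $F(x^0)-F^\ast$, divide by $\lambda_g\tau T$, and bound $\min_t \le \frac{1}{T}\sum_t$ to extract the stated $\mathcal{O}(1/T)$ plus the two error groups. I expect the main obstacle to be Step 2: the coefficients $\lambda_j^{t,i}\prod_r(1-\mu_r^{t,i})$ are both client- and time-dependent, so the usual FedAvg drift recursion has to be re-derived with the weight-decay factor present, and one has to verify that the bound $\|x_j^{t,i}-x^{t-1}\|\le jA$ (from the $A$-hypothesis, not from gradient smallness alone) is tight enough to yield the claimed $L^2\tau^2 A^3$ dependence rather than something worse; a careful Young split and bookkeeping of which $A$ is ``drift'' versus which is ``one extra step'' should close this.
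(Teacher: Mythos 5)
Your proposal is correct and follows essentially the same route as the paper: the smoothness expansion of the global update from Lemma \ref{lemma:wd}, the decomposition of the local gradients into descent, client drift (controlled by $\|x_j^{t,i}-x^{t-1}\|\le jA$ via the $A$-hypothesis), heterogeneity, and stochastic noise, and the bound $\mu_g^{t-1}\le\lambda_g\tau u_t$ combined with the constant $C$ for both the linear and quadratic weight-decay deviation terms. The only step you elide is that before dividing by $T$ one needs a lower bound on the aggregated effective stepsize $\frac{1}{M}\sum_{i}\mathbb{E}\beta^{t,i}$ (the adaptive clipping can shrink $\lambda_j^{t,i}$ well below $l_t$, so the descent coefficient is not automatically $\Theta(\lambda_g\tau)$); the paper handles this by positing $\eta_t>\delta$ and supplying an explicit lower bound in Lemma \ref{lemma: lower} for the specific FedNAR choice of $\lambda_t,\mu_t$.
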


\vspace{-3mm}
\paragraph{Discussions.} For the first time, we demonstrate that federated optimization, with non-convex objective functions and stochastic local gradient, achieves convergence when employing adaptive learning rate and weight decay. Our findings resemble previous studies \cite{acar2021federated, wang2020tackling}, featuring a diminishing term $\mathcal{O}(1/T)$ as well as a non-vanishing term arising from weight decay, client drift, and data heterogeneity in FL. Notably, our bound recovers previous results when weight decay is disregarded, \ie $C=0$.

The detailed proof can be found in the Appendix \ref{section: proof}. Here we briefly explain the condition of exponential decreasing. To tackle the case with deviation caused by local weight decay, one of the key steps is to bound $\|\mu_g^{t}x^t\|$ by a constant as shown in the complete proof. So for this term, we have
{\small
\begin{align}
\|\mu_g^tx^t\|&=\lambda_g\left(1-\frac{1}{M}\sum_{i=1}^M\prod_{j=0}^{\tau-1}\left(1-\mu_j^{t+1,i}\right)\right)\|x^t\| \leq \lambda_g\left(1-\frac{1}{M}\sum_{i=1}^M\left(1-u_{t+1}\right)^\tau\right)\|x^t\| \nonumber \\
&\leq \lambda_g\left(1-\frac{1}{M}\sum_{i=1}^M\left(1-\tau u_{t+1}\right)\right) \|x^t\|=\lambda_g \tau u_{t+1} \|x^t\|\leq \lambda_g\tau C,
\end{align}
}
which is a constant unrelated to $t$. Here the second inequality is from $(1-\alpha)^n \geq 1-n\alpha$ for $0\leq\alpha \leq 1$ and $n\geq1$. We can see that the exponential decrease condition can be released as long as the decay of $\{u_t\}$ can control $\{\|x^t\|\}$. Since we have already shown a polynomial increasing rate for $\{\|x^t\|\}$, a convenient choice for both theory and practice is to set $\{u_{t}\}$ an exponential decrease.
We summarize the conditions of $\lambda$ and $\mu$ needed to provide convergence below:

\begin{condition}
There exists $A>0$, such that for any $t\geq 0$ and $(g,x)\in\mathbb{R}^d\times\mathbb{R}^d$, $\|\lambda_t(g,x)g+\mu_t(g,x)x\|\leq A$. There exists $\{u_t\}\downarrow0$ with exponential decay, such that for any $t\geq 0$, and any $(g,x)\in\mathbb{R}^d\times\mathbb{R}^d$, $\mu_t(g,x)\leq u_t$.
\label{condition:cond}
\end{condition}

\paragraph{Choice of $\lambda$ and $\mu$.} 
In order to meet the requirement of Condition \ref{condition:cond}, a straightforward and intuitive way is to use clipping. We start by selecting two sequences ${l_t}$ and ${u_t}$, where $u_t=u_0\gamma^t$ for some $\gamma<1$ is the control sequence of $\mu_t$. The sequence $\{l_t\}$ is the learning rate schedule, and is usually set to be a constant or decrease over training. Then we define 
\begin{align}
\lambda_t(g,x):=
\begin{cases}
  l_t\cdot A / \|g + u_t x/l_t\|, & \text{if } \|g + u_tx/l_t\| > A \\
  l_t, & \text{otherwise}
  \label{eq:lambda}
\end{cases}
,
\end{align}
and
\begin{align}
\mu_t(g,x):=
\begin{cases}
  u_t\cdot A / \|g + u_tx/l_t\|, & \text{if } \|g + u_tx/l_t\| > A \\
  u_t, & \text{otherwise}
\end{cases}
.
\label{eq:mu}
\end{align}

We can easily verify that the choices of $\lambda_t$ and $\mu_t$ given in equations \ref{eq:lambda} and \ref{eq:mu} satisfy Condition \ref{condition:cond}. See details in Appendix \ref{section: proof}.
Hence, we choose these functions as the input of Algorithm \ref{alg:fednar}. 

\vspace{-3mm}
\subsection{Understanding FedNAR}
\label{section: understand}
\vspace{-2mm}
\paragraph{Normalized annealing regularization.}
Now we can explain why we call our method ``Normalized Annealing Regularization". First, the regularization term is co-normalized with the gradient term. Second, the "annealing" comes from two aspects: (1) The sequence ${u_t}$ decays with an exponential rate, and $\mu_t(g,x)\leq u_t$ uniformly for all $(g,x)$. (2) We observe from experiments that the norm of $x$ keeps increasing, so the term $\|g+u_tx/l_t\|$ will also keep increasing. NAR computes $A/\|g+u_tx/l_t\|$ which by the preceding argument is a decreasing term, thus introducing another form of decay.
We will provide further illustrations of this in the experiment section.

\vspace{-3mm}
\paragraph{Flexibility of FedNAR.}
Since FedNAR's convergence analysis does not impose any extra conditions on the local loss functions, it generalizes several FL algorithms with distinct local training objectives, while conferring upon them theoretical assurances comparable to FedNAR. For example, FedProx and SCAFFOLD only differ from FedAvg on local loss functions. Thus, we only need to change Line \ref{line: grad} in Algorithm \ref{alg:fednar} to the corresponding loss functions in FedProx and SCAFFOLD to obtain FedProx-NAR and SCAFFOLD-NAR.

\vspace{-3mm}
\paragraph{Comparison with gradient clipping.} 
Gradient clipping \cite{zhang2019gradient} is a widely used technique in federated optimization to enhance training stability as demonstrated by various studies \cite{acar2021federated,jhunjhunwala2023fedexp,rothchild2020fetchsgd}. It is achieved by constraining the norm of gradients in each iteration. The technique involves setting a threshold value and rescaling the gradients if they exceed this limit. In our framework, gradient clipping can be represented by setting
\vspace{-1mm}
{\small
\begin{align}
\lambda_t(g,x):=
\begin{cases}
  l_t\cdot A / \|g\|, & \text{if } \|g\| > A \\
  l_t, & \text{otherwise}
\end{cases}
, \text{and\quad} \mu_t(g,x):= u_t.
\end{align}\\[-6mm]}

Our proposed FedNAR distinguishes itself from previous gradient clipping strategies in two primary ways. First, both our weight decay and learning rate functions are adaptive, with dependencies on the current gradient and model weights. This contrasts with traditional gradient clipping, which only adapts the learning rate function based on the current gradient. Second, instead of solely employing the norm of the gradient, FedNAR adopts the norm of the sum of the gradient and weight decay as the clipping criterion. This strategy ensures that each local update is bounded, thereby preventing potential explosions caused by large decay terms $\mu_k^{t,i}x_k^{t,i}$ in federated optimization. Consequently, FedNAR not only guarantees convergence but also bolsters training stability within the context of federated optimization.

\vspace{-3mm}
\paragraph{Implementation of FedNAR.}
Implementing FedNAR merely requires specifying an initial weight decay and a decay rate. Therefore, for state-of-the-art baseline methods that employ gradient clipping~\cite{jhunjhunwala2023fedexp,acar2021federated}, no additional hyperparameters are needed. Fundamentally, FedNAR can be interpreted as a modest alteration that adjusts the sequence of gradient clipping and weight decay operations. Consequently, the implementation of FedNAR can be as uncomplicated as modifying a single line of code, resulting in a solution that is both highly effective and efficient.


\vspace{-3mm}
\section{Experiments}
\vspace{-2mm}
We show the results of FedNAR for both vision and language datasets in this section. 

\vspace{-3mm}
\subsection{Main results}
\vspace{-2mm}
\paragraph{Experiments on the CIFAR-10 dataset.} As outlined in Section \ref{section:empirical}, we manually partition the CIFAR-10 dataset into 100 clients, following the methodology described in \cite{hsu2019measuring, jhunjhunwala2023fedexp}. Specifically, we randomly assign a distribution $\mathbf{p}_i\sim \text{Dirichlet}(\alpha,\dots,\alpha)$ to each client $i$, then draw data from the training set in line with these distributions $\{\mathbf{p}_i\}$. We choose $\alpha$ from the set $\{0.3, 1, 10\}$, with larger $\alpha$ values indicating a more balanced data distribution across each client. With $\alpha=0.3$, the local data distribution is highly skewed, whereas with $\alpha=10$, the local distribution is nearly balanced across all classes. We adhere to the training parameters and settings, optimally tuned as per \cite{jhunjhunwala2023fedexp}. For each algorithm, we conduct 1000 rounds of training for full convergence, with 20 steps of local training per round. In each round, we randomly sample 20 clients. For the baseline algorithms, we apply gradient clipping as suggested in \cite{jhunjhunwala2023fedexp,acar2021federated}. We set the local learning rate to 0.01 with a decay of 0.998, and cap the maximum norm at 10 for both gradient clipping and FedNAR to ensure a fair comparison.

Table \ref{tab:cifar local} showcases the performance of FedAvg-NAR, FedProx-NAR, and SCAFFOLD-NAR across different $\alpha$ values. The results reveal a consistent performance boost offered by FedNAR across a variety of $\alpha$ values, with particularly significant improvements observed for skewed distributions. Test accuracy after each training round is also illustrated in Figure \ref{fig:cifar10 test acc}. 


\begin{table}[]
\centering
\caption{Experimental results on the CIFAR-10 dataset, where the FedNAR plugin is incorporated into the FedAvg, FedProx, and SCAFFOLD algorithms, with data partitioning across varying levels of heterogeneity controlled by $\alpha$.}
\label{tab:accuracy}
{\small
\begin{tabular}{lllllll}
\toprule
 \rowcolor{Gray} & \multicolumn{2}{c}{{$\alpha$ = 0.3}} & \multicolumn{2}{c}{{$\alpha$ = 1}} & \multicolumn{2}{c}{{$\alpha$ = 10}} \\
\rowcolor{Gray} \multirow{-2}{*}{{Algorithm}} & {Baseline} & {FedNAR} & {Baseline} & {FedNAR} & {Baseline} & {FedNAR} \\ \hline
\midrule
 FedAvg & 85.19 & \textbf{87.64} & 88.50 & \textbf{89.55} & 89.45 & \textbf{90.23} \\
\rowcolor{cornsilk} FedProx & 85.02 & \textbf{87.45} & 88.56 & \textbf{89.58} & 89.37 & \textbf{90.38} \\
 SCAFFOLD & 86.89 & \textbf{89.17} & 89.52 & \textbf{91.13} & 90.64 & \textbf{91.85} \\
\midrule
FedExP & 86.46 & \textbf{88.55} & 88.28 & \textbf{89.51} & 88.82 & \textbf{89.69} \\
\rowcolor{cornsilk} FedAdam & 84.12 & \textbf{85.81} & 87.35 & \textbf{87.78} & 88.86 & \textbf{89.56} \\
FedAvgm & 90.33 & \textbf{90.43} & 91.60 & \textbf{91.76} & \textbf{92.19} & 91.15 \\
\bottomrule
\end{tabular}}
\label{tab:cifar local}
\vspace{-2mm}
\end{table}

\begin{figure}[h!]
\vspace{-1mm}
\centering
\includegraphics[width=0.9\textwidth]{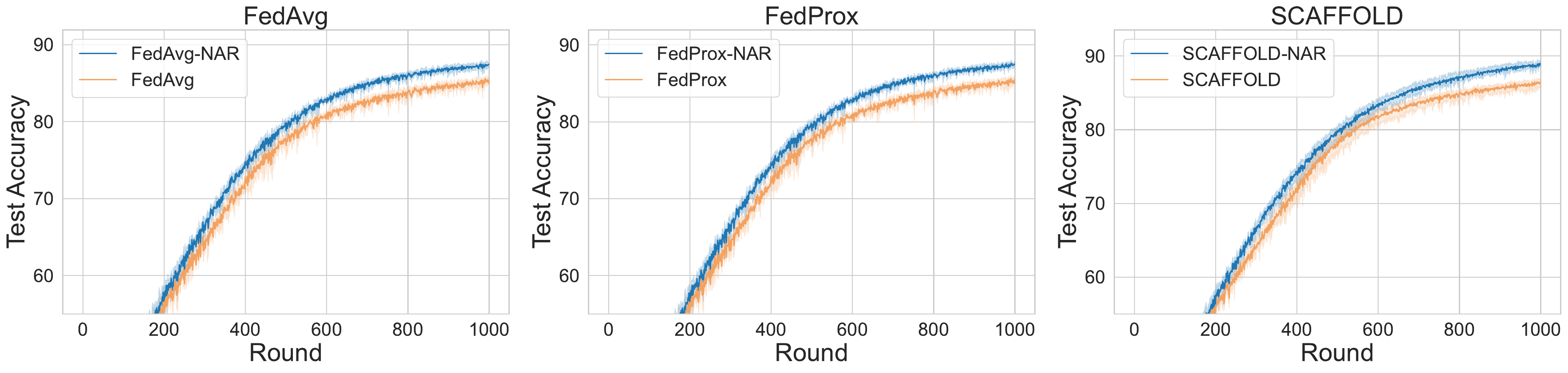}
\vspace{-2mm}
\caption{Test accuracy curve for FedAvg, FedProx, SCAFFOLD and their FedNAR variants for $\alpha=0.3$. For each training, we take 3 random seeds.}
\label{fig:cifar10 test acc}
\vspace{-5mm}
\end{figure}

The backbone algorithms, FedProx and SCAFFOLD, are variants of FedAvg that modify the local training objective. FedNAR can be tailored to these client-side FedAvg variants, which provide a guarantee of convergence. Moreover, FedNAR can also be incorporated into server-side FedAvg variants, which retain the local training but alter the global update. Table \ref{tab:cifar local} showcases the results of these implementations, using the same hyperparameter configurations. These results corroborate the same conclusion.

\vspace{-2mm}
\begin{wraptable}{r}{0.58\textwidth}
\centering
\caption{Experimental results on the Shakespeare dataset, where the FedNAR plugin is incorporated into the FedAvg, FedProx, and SCAFFOLD algorithms. 
WD denotes the initial weight decay value applied in the first round.
} 
\label{tab:accuracy}
\vspace{-2mm}
{\small
\begin{tabular}{lllll}
\toprule
\rowcolor{Gray} & \multicolumn{2}{c}{{WD = $10^{-4}$}} & \multicolumn{2}{c}{{ WD = $10^{-3}$}} \\
\rowcolor{Gray} \multirow{-2}{*}{Algorithm} & Baseline & FedNAR & Baseline & FedNAR \\ \hline
\midrule
 FedAvg & 42.44 & \textbf{44.37} & 40.05 & \textbf{44.69} \\
\rowcolor{cornsilk} FedProx & 41.92 & \textbf{43.57} & \textbf{40.91} & 40.26 \\
 SCAFFOLD & \textbf{47.55} & 45.60 & 42.40 & \textbf{44.53}\\ \midrule
FedExP & 38.15 & \textbf{39.63} & 39.67 & \textbf{39.92} \\
\rowcolor{cornsilk} FedAdam & \textbf{45.80} & 45.42 & 47.19 & \textbf{47.48} \\
FedAvgm & 45.03 & \textbf{45.63} & 46.12 & \textbf{47.13} \\ 
\bottomrule
\end{tabular}}
\vspace{-5mm}
\label{tab:shake}
\end{wraptable}
\paragraph{Experiments on the Shakespeare dataset.} The Shakespeare dataset \cite{caldas2018leaf}, derived from \textit{The Complete Works of William Shakespeare}, assigns each speaking role in every play to a unique client, leading to an inherent \textit{non-i.i.d.} partition. We select 100 clients and train a model for the task of next-character prediction, incorporating 80 possible characters, following previous studies~\cite{caldas2018leaf,acar2021federated}. We train a transformer model with 6 attention layers for feature extraction alongside a fully connected layer for predictions. Each character is represented by a 128-dimensional vector, and the transformer's hidden dimension is set to 512. The training regimen spans 300 rounds, with 20 clients chosen at random in each round and 5 local epochs executed on each client. During local training, we utilize a batch size of 100, a learning rate of 0.1 with a decay rate of 0.998 per round, a dropout rate of 0.1, and a maximum norm of 10 for both the baseline algorithms and FedNAR. Global updates are performed with a learning rate of 1.0. As in the previous experiment, FedNAR is integrated into six core algorithms. We also employ two distinct weight decay values to showcase FedNAR works for different hyperparameters. The results are presented in Table~\ref{tab:shake}. 

\vspace{-3mm}
\subsection{Ablation studies}
\vspace{-2mm}
In this section, we conduct some ablation studies. For all the experiments, we run FedAvg on CIFAR-10 with 100 clients and $\alpha=0.3$ for 1000 rounds, where each round consists of updates from 20 clients and 20 steps on each client. The local learning rate is 0.01 with a decay of 0.998.

\vspace{-3mm}
\paragraph{FedNAR's robustness in relation to hyperparameters.} As demonstrated in Section~\ref{section:empirical}, weight decay exhibits high sensitivity in federated optimization and requires meticulous tuning. However, due to its co-normalization mechanism, FedNAR has demonstrated a greater robustness towards the selection of the initial weight decay. Employing the same experimental framework as in Section~\ref{section:empirical}, we observed that initiating weight decay at 0.1 in FedAvg significantly compromised model performance, inciting a gradient explosion due to an overly high weight decay. This led to a performance standard similar to a model with no weight decay. Yet, FedNAR was found to navigate this hurdle through self-adjustment, leading to considerable performance enhancements during the latter stages of training, in spite of the initial detrimental impact on performance. Additionally, further experiments on FedProx and SCAFFOLD also manifested a similar occurrence. The test accuracy curve is depicted in Figure~\ref{fig:robust}.

\begin{figure}[h!]
    \vspace{-3mm}
    \centering
    \includegraphics[width=0.9\textwidth]{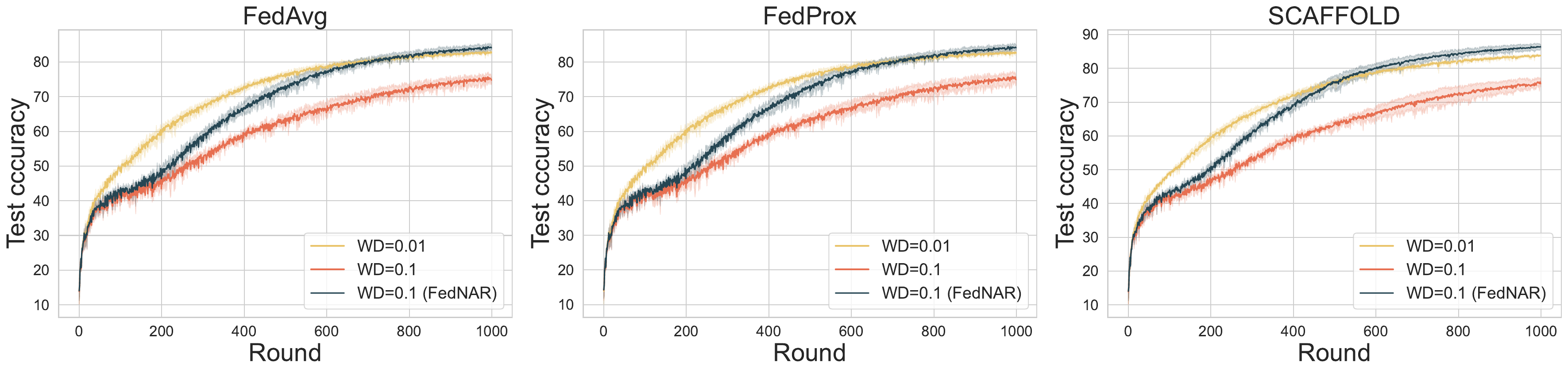}
    \vspace{-2mm}
    \caption{The self-adjusting capability of FedNAR. WD denotes the initial weight decay value applied in the first round. It is crucial to note that in each round $t$, the weight decay remains consistent for both the baseline methods and FedNAR. The distinction lies in FedNAR's adoption of co-clipping across both weight decay and gradient. We drew comparisons among FedAvg, FedProx, and SCAFFOLD with WD values of 0.1 and 0.01. The utilization of 0.1 as the WD value proved to be less than ideal, resulting in a performance decrement in the baseline methods. Conversely, FedNAR initially mimics this trend but swiftly ameliorates during the subsequent stages, outstripping the baselines even with a more favorable initial WD value of 0.01.}
    \label{fig:robust}
    \vspace{-3mm}
\end{figure}

\vspace{-3mm}
\paragraph{Frequency and strength of clipping in FedNAR.} In Section~\ref{section: understand}, we showcased how FedNAR employs an ``annealing" process to stave off excessive deviation precipitated by weight decay, from two angles. The first angle pertains to the exponential decay in ${u_t}$, while the second involves co-clipping with the gradient. To delve deeper into the frequency and intensity of the clipping operation, we scrutinized the updates that were subjected to clipping. In particular, we evaluated the count of clipped updates and the average norm of the update $\|g + \frac{u_t}{l_t} x\|_2$, signifying the intensity of clipping, for these steps, as depicted in Figure~\ref{fig:clip count}. Our analysis points out that both the count and the norm of updates experiencing clipping escalate over time. This observation implies that the co-clipping term gains increasing prominence during the training process, aiding in the annealing of weight decay which is proportional to $1/\|g + \frac{u_t}{l_t} x\|_2$.

\begin{figure}[h!]
    \vspace{-2mm}
    \centering    \includegraphics[width=1.0\textwidth]{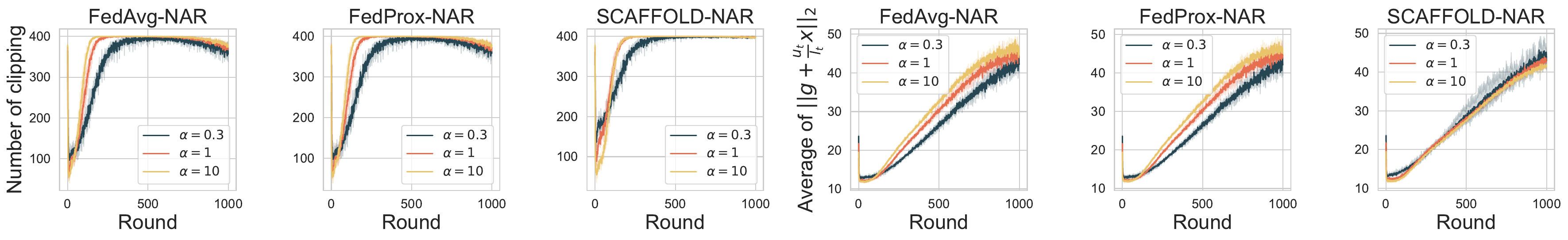}
    \vspace{-4mm}
    \caption{Frequency and strength of clipping. In every round, there is a sum total of 20 clients, and each client carries out 20 updates, leading to a collective tally of 400 updates per round. We monitor the count of clipping instances within these 400 updates during each round and compute the average norm of the updates subjected to clipping. We execute experiments for diverse $\alpha$ values, and for every algorithm, we present the outcomes utilizing three distinct seeds.}
    \label{fig:clip count}
    \vspace{-4mm}
\end{figure}

\vspace{-3mm}
\section{Limitation}
\vspace{-2mm}
As an inaugural exploration into federated optimization with weight decay, we concentrated on examining the theoretical attributes of FL algorithms that utilize gradient descent updates both server-side and client-side. Future investigations can extend this to encompass different server update methodologies. Furthermore, due to privacy restrictions, we could not carry out experiments using real heterogeneous data, such as hospital data. Instead, we relied on simulated data distributions or naturally non-\emph{i.i.d} splits, following the conventional methods employed in prior FL algorithm research~\cite{mcmahan2017communication,acar2021federated,wang2020tackling,jhunjhunwala2023fedexp,wang2021field,caldas2018leaf}.
\vspace{-3mm}
\section{Conclusion}
\vspace{-2mm}
In this study, we delve into the influence of weight decay in FL, underscoring its paramount importance in traditional FL scenarios characterized by multiple local updates and skewed data distribution. To explore these facets, we present an innovative theoretical framework for FL that integrates weight decay into convergence analysis. Upon discerning the conditions for convergence, we introduce FedNAR, a solution that conforms to these prerequisites. FedNAR boasts a straightforward implementation and can be effortlessly adapted to a variety of backbone FL algorithms. In our experiments on vision and language datasets, we consistently record significant performance improvements across all backbone FL algorithms when we employ FedNAR, leading to accelerated convergence rates. In addition, our ablation studies illustrate FedNAR's heightened robustness against hyperparameters. Going forward, our aim is to expand this theoretical framework to include a wider array of FL algorithms, incorporating adaptive optimization methods.


\bibliographystyle{unsrtnat}
\bibliography{ref}

\newpage
\appendix

{\hypersetup{linkcolor=black}
\parskip=0em
\renewcommand{\contentsname}{Contents of the Appendix}
\tableofcontents
\addtocontents{toc}{\protect\setcounter{tocdepth}{3}}
}

\section{Additional experimental results}
In this section, we present additional accuracy curves from our experiments, highlighting the superior performance and faster convergence of FedNAR in almost all cases.
\subsection{CIFAR-10 dataset}
Figure \ref{fig:cifar_appendix} displays test accuracy curves for all six backbone algorithms under three distinct imbalance parameters: $\alpha\in\{0.3, 1, 10\}$.  The results clearly demonstrate that FedNAR outperforms the baselines, particularly in scenarios with imbalanced data.
\begin{figure}
    \centering
    \includegraphics[width=1\textwidth]{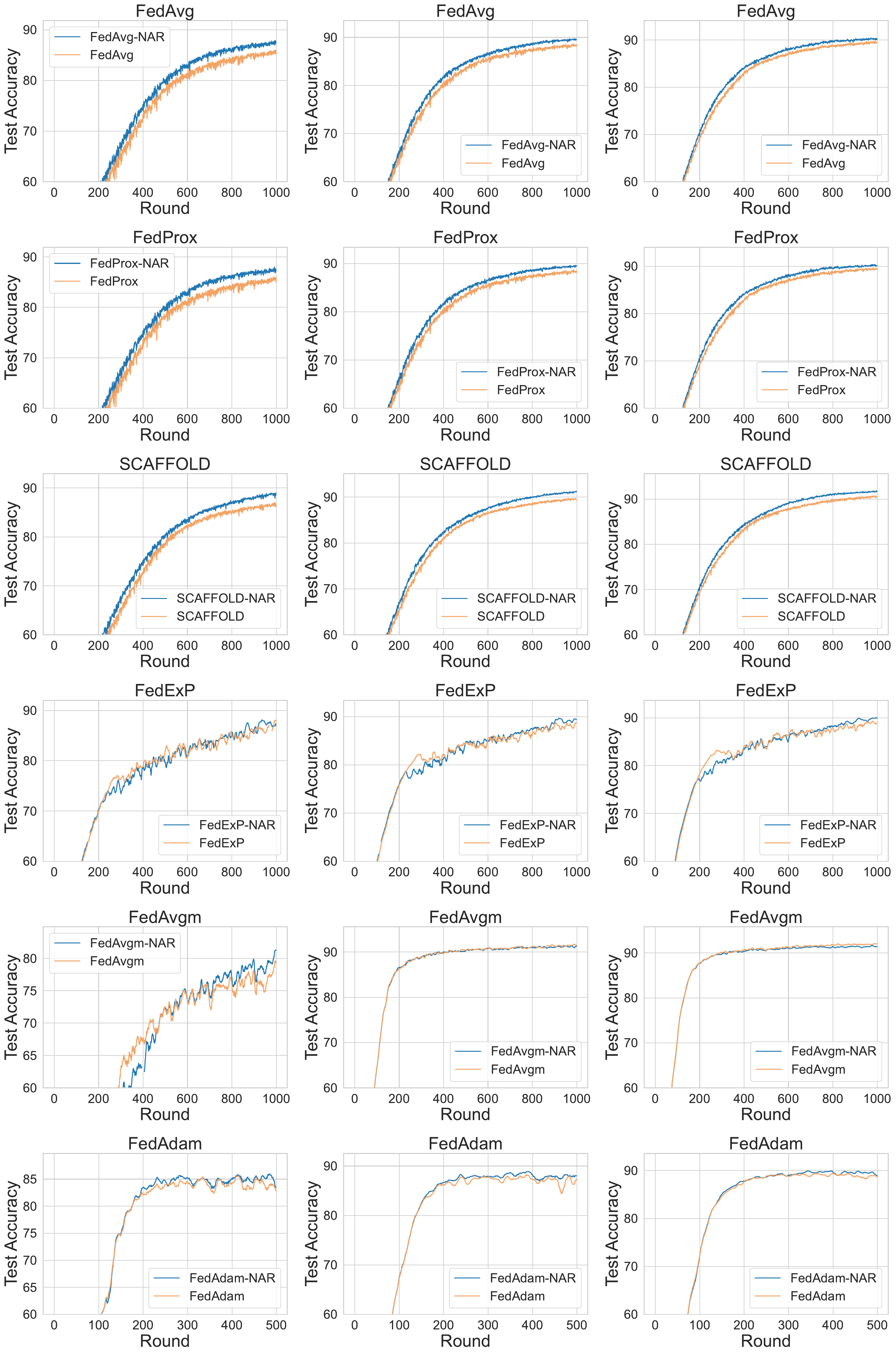}
    \caption{Test accuracy curve for CIFAR-10 dataset for 6 different algorithms. For each algorithm, three columns correspond to the results of $\alpha\in\{0.3, 1, 10\}$ respectively.} 
    \label{fig:cifar_appendix}
\end{figure}

\subsection{Shakespeare dataset}
The experimental results presented in Figure \ref{fig: shake appendix 3} and \ref{fig: shake appendix 4} showcase the outcomes of experiments performed on the Shakespeare dataset. Six backbone algorithms were utilized, with initial weight decay values selected from $\{10^{-3}, 10^{-4}\}$. These findings serve as evidence that FedNAR, as an adaptive weight decay scheduling algorithm, exhibits effectiveness across various initial weight decay values.

\begin{figure}[!h]
    \centering
    \includegraphics[width=1\textwidth]{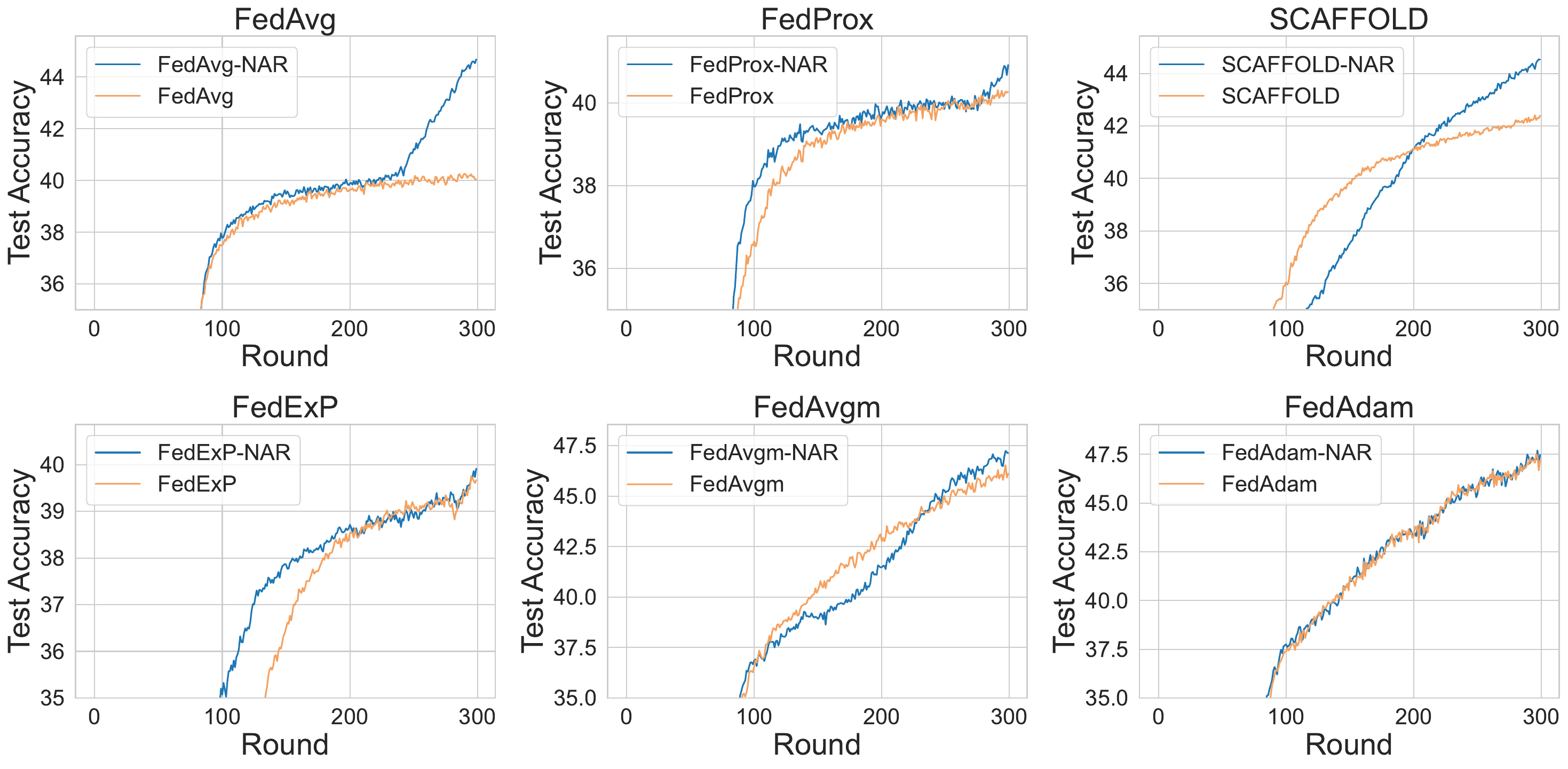}
    \caption{Test accuracy for Shakespeare dataset using different backbone algorithms with initial weight decay $10^{-3}$.}
    \label{fig: shake appendix 3}
\end{figure}

\begin{figure}[!h]
    \centering
    \includegraphics[width=1\textwidth]{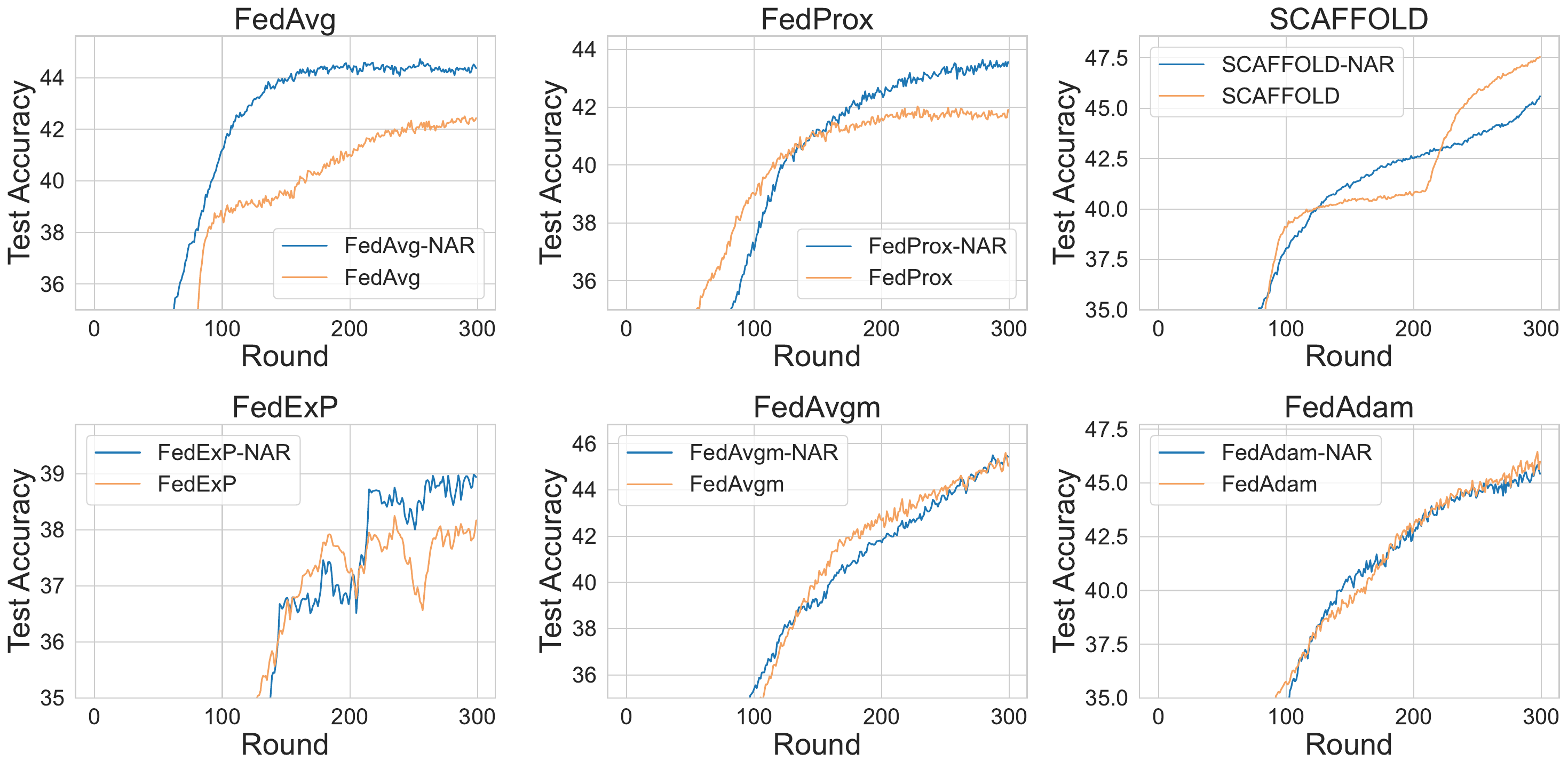}
    \caption{Test accuracy for Shakespeare dataset using different backbone algorithms with initial weight decay $10^{-4}$.}
    \label{fig: shake appendix 4}
\end{figure}

\section{Supplementary proofs}
\label{section: proof}
In this section, we provide a comprehensive proof for Lemma \ref{lemma:wd}, Theorem \ref{theorem:poly}, and Theorem \ref{theorem:main}, which are discussed in Section \ref{sec: poly} and Section \ref{sec: main proof}. These proofs pertain to the general framework \ref{alg:fednar}, incorporating adaptive learning rate and weight decay. Additionally, in Section \ref{sec: verify}, we focus on the specific context of FedNAR and present a detailed analysis of its properties by showcasing interesting characteristics.

\subsection{Proof of Lemma \ref{lemma:wd} and Theorem \ref{theorem:poly}}
\label{sec: poly}
These two results exemplify the distinctive update dynamics exhibited by our framework, encompassing both the specific update formula and an upper bound constraint on the parameter norm.

\setcounter{lemma}{0}
\begin{lemma}
For round $t\geq 1$, the global update is equivalent to be
$$x^t=(1-\mu_g^{t-1})x^{t-1}-\lambda_g h^t,$$
where
\vspace{-2mm}
{\small
\begin{align}
    \mu_g^{t-1}:=\lambda_g - \frac{\lambda_g}{M}\sum_{i=1}^M\prod_{j=0}^{\tau-1}(1-\mu_j^{t,i}), \quad
    h^t := \frac{1}{M}\sum_{i=1}^M\sum_{j=0}^{\tau-1}\lambda_j^{t,i}\prod_{r=j}^{\tau-1}(1-\mu_r^{t,i})g_j^{t,i}.
\end{align}\\[-6mm]
}
\end{lemma}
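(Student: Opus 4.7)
The plan is a straightforward unrolling of the local recursion followed by bookkeeping at the aggregation step, so I would structure the proof in three stages.

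First, I would fix a round $t$ and client $i$, and prove by induction on $k \in \{0,1,\dots,\tau\}$ that the local iterate admits the closed form
\begin{equation*}
x_k^{t,i} \;=\; \prod_{j=0}^{k-1}\bigl(1-\mu_j^{t,i}\bigr)\,x_0^{t,i} \;-\; \sum_{j=0}^{k-1} \lambda_j^{t,i}\prod_{r=j+1}^{k-1}\bigl(1-\mu_r^{t,i}\bigr)\,g_j^{t,i},
\end{equation*}
with the usual convention that an empty product equals $1$ (so the statement is trivial at $k=0$). The inductive step simply plugs this identity into the update rule $x_{k}^{t,i} = (1-\mu_{k-1}^{t,i})\,x_{k-1}^{t,i} - \lambda_{k-1}^{t,i}\,g_{k-1}^{t,i}$ from line 7--8 of Algorithm \ref{alg:fednar}; the multiplicative factor $(1-\mu_{k-1}^{t,i})$ extends every product by one more term, and the new additive contribution $-\lambda_{k-1}^{t,i} g_{k-1}^{t,i}$ corresponds to the $j=k-1$ summand (whose trailing product is empty, hence $1$).

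Second, specialising to $k=\tau$ and using $x_0^{t,i}=x^{t-1}$, I would read off the local update
\begin{equation*}
\Delta^{t,i} \;=\; x_0^{t,i}-x_\tau^{t,i} \;=\; \Bigl[1-\prod_{j=0}^{\tau-1}(1-\mu_j^{t,i})\Bigr]x^{t-1} \;+\; \sum_{j=0}^{\tau-1}\lambda_j^{t,i}\prod_{r=j+1}^{\tau-1}(1-\mu_r^{t,i})\,g_j^{t,i},
\end{equation*}
then average over clients to obtain $\Delta^t=\frac{1}{M}\sum_i \Delta^{t,i}$. Collecting the $x^{t-1}$ coefficient gives exactly the quantity defined as $\mu_g^{t-1}/\lambda_g$, and collecting the gradient part gives $h^t$.

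Third, substituting into the server step $x^t = x^{t-1} - \lambda_g \Delta^t$ and factoring yields $x^t = (1-\mu_g^{t-1}) x^{t-1} - \lambda_g h^t$, as claimed. I do not anticipate a real obstacle: the only thing to be careful about is the index range of the inner product $\prod_{r=\cdot}^{\tau-1}(1-\mu_r^{t,i})$ (it should start at $r=j+1$, with the $j=\tau-1$ term contributing an empty product) and the convention on empty products; once these conventions are fixed the identity is purely mechanical. I would also flag that the result makes no use of Assumptions \ref{assum:smooth}--\ref{assum:hetero} or of any property of $\lambda_t,\mu_t$ beyond being scalars, so the lemma holds in full generality for Algorithm \ref{alg:fednar}.
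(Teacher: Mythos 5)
Your proof is correct and follows essentially the same route as the paper: unroll the local recursion to a closed form for $x_\tau^{t,i}$, form $\Delta^{t,i}$ and average, then substitute into the server step. One point worth highlighting: your inner product $\prod_{r=j+1}^{\tau-1}(1-\mu_r^{t,i})$ is the \emph{correct} one, whereas the lemma statement and the paper's own proof both write $\prod_{r=j}^{\tau-1}(1-\mu_r^{t,i})$, which is off by one --- for $\tau=1$ it would give $x_1^{t,i}=(1-\mu_0^{t,i})x_0^{t,i}-\lambda_0^{t,i}(1-\mu_0^{t,i})g_0^{t,i}$ rather than the actual update $x_1^{t,i}=(1-\mu_0^{t,i})x_0^{t,i}-\lambda_0^{t,i}g_0^{t,i}$. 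So you have in effect caught an index typo in the paper; it is harmless for the downstream convergence analysis (the bounds there only use $0\le 1-\mu\le 1$ and $\beta_j^{t,i}/\beta^{t,i}\le 1$), but your version of $h^t$ is the one that actually matches Algorithm \ref{alg:fednar}.
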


\begin{proof}
For round $1\leq t\leq T$ and client $1\leq i\leq M$, write the the update for iteration $1\leq k\leq \tau$, we have:
\begin{align*}
    x_1^{t,i} &= (1-\mu_0^{t,i})x_0^{t,i}-\lambda_0^{t,i}g_0^{t,i}, \\
    \vdots,\\ 
    x_\tau^{t,i} &= (1-\mu_{\tau-1}^{t,i})x_{\tau-1}^{t,i}-\lambda_{\tau-1}^{t,i}g_{\tau-1}^{t,i}.
\end{align*}
Compute $x_\tau^{t,i}$ iteratively, we have:
$$x_\tau^{t,i}=\prod_{j=0}^{\tau-1}(1-\mu_j^{t,i})x_0^{t,i}-\sum_{j=0}^{\tau-1}\lambda_j^{t,i}\prod_{r=j}^{\tau-1}(1-\mu_r^{t,i})g_j^{t,i},$$
local update
$$\Delta^{t,i}=x_0^{t,i}-x_\tau^{t,i}=\left(1-\prod_{j=0}^{\tau-1}(1-\mu_j^{t,i})\right)x_0^{t,i}+\sum_{j=0}^{\tau-1}\lambda_j^{t,i}\prod_{r=j}^{\tau-1}(1-\mu_r^{t,i})g_j^{t,i},$$
and aggregated local updates (pseudo gradient) 
$$\Delta^t = \frac{1}{M}\sum_{i=1}^M\Delta^{t,i}=\left(1-\frac{1}{M}\prod_{j=0}^{\tau-1}(1-\mu_j^{t,i})\right)x_0^{t,i}+\frac{1}{M}\sum_{i=1}^M\sum_{j=0}^{\tau-1}\lambda_j^{t,i}\prod_{r=j}^{\tau-1}(1-\mu_r^{t,i})g_j^{t,i}.$$

So the global update is
$$x^{t}=x^{t-1}-\lambda_g\Delta^t=\left(1-\lambda_g+\frac{\lambda_g}{M}\sum_{i=1}^M\prod_{j=0}^{\tau-1}(1-\mu_j^{t,i})\right)x^{t-1}-\frac{\lambda_g}{M}\sum_{i=1}^M\sum_{j=0}^{\tau-1}\lambda_j^{t,i}\prod_{r=j}^{\tau-1}(1-\mu_r^{t,i})g_j^{t,i},$$
which is the form in the Lemma.
\end{proof}

In the following, we denote $[N]:=\{1, 2, \cdots, N\}$, and $\xi_t(g,x):=\lambda_t(g,x)g + \mu_t(g,x)x$ for $1\leq t\leq T$.

\setcounter{theorem}{0}
\begin{theorem}
If the functions $\lambda_t$ and $\mu_t$ satisfy that there exists $A>0$, such that for any $t\geq 0$, and $(g,x)\in\mathbb{R}^d\times\mathbb{R}^d$, $\|\lambda_t(g, x)g + \mu_t(g,x)x\|\leq A$, then the norm of global model in Algorithm \ref{alg:fednar} is at most polynomial increasing with respect to $t$, \ie $\|x_t\|\leq \mathcal{O}(poly(t))$.
\end{theorem}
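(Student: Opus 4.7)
The plan is to reduce the global bound $\|x^t\|$ to a per-step bound via a simple triangle-inequality / telescoping argument, using the hypothesis directly on the local updates rather than manipulating the closed-form expression from Lemma~\ref{lemma:wd}. The key observation is that the local update rule can be rewritten as
\begin{equation*}
x_k^{t,i} \;=\; (1-\mu_{k-1}^{t,i}) x_{k-1}^{t,i} - \lambda_{k-1}^{t,i} g_{k-1}^{t,i} \;=\; x_{k-1}^{t,i} \;-\; \bigl(\lambda_{k-1}^{t,i} g_{k-1}^{t,i} + \mu_{k-1}^{t,i} x_{k-1}^{t,i}\bigr),
\end{equation*}
so each single-step increment is exactly $\xi_t(g_{k-1}^{t,i}, x_{k-1}^{t,i})$, which by hypothesis has norm bounded by $A$.

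First I would telescope across the $\tau$ local iterations on client $i$: $\|x_\tau^{t,i} - x_0^{t,i}\| \leq \sum_{k=1}^{\tau} \|\xi_t(g_{k-1}^{t,i}, x_{k-1}^{t,i})\| \leq \tau A$. Hence each local update $\Delta^{t,i} = x_0^{t,i} - x_\tau^{t,i}$ satisfies $\|\Delta^{t,i}\| \leq \tau A$, and by the triangle inequality the aggregated update obeys $\|\Delta^t\| \leq \tau A$ as well (this is where full-client participation is convenient, but the same bound holds trivially for any sampled subset since $\Delta^t$ is a convex combination).

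Next I would plug this into the global step $x^t = x^{t-1} - \lambda_g \Delta^t$ to obtain
\begin{equation*}
\|x^t\| \;\leq\; \|x^{t-1}\| + \lambda_g \tau A.
\end{equation*}
A straightforward induction on $t$ then gives $\|x^t\| \leq \|x^0\| + t\,\lambda_g \tau A = \mathcal{O}(t)$, which is in particular polynomial in $t$, completing the proof.

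There is no real obstacle here: the hypothesis on $\xi_t$ already says exactly that per-step displacements are bounded, and the argument is essentially a two-layer triangle inequality (across local iterations within a round, then across rounds). The only subtle point worth flagging explicitly is that one should work directly with the update law rather than the Lemma~\ref{lemma:wd} decomposition, since the factor $(1-\mu_g^{t-1})$ appearing there can in principle be larger than $1$ (if $\mu_g^{t-1} < 0$) or smaller than $0$, so attempting a contraction-style bound via $|1-\mu_g^{t-1}|$ would require extra case analysis that is sidestepped by the direct approach above.
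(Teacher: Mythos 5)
Your proposal is correct and follows essentially the same route as the paper's own proof: both rewrite the local step as $x_k^{t,i}=x_{k-1}^{t,i}-\xi_t(g_{k-1}^{t,i},x_{k-1}^{t,i})$, bound each increment by $A$ via the hypothesis, and telescope over local iterations and rounds to get $\|x^t\|\leq\|x^0\|+\lambda_g\tau A t$. The only difference is cosmetic (induction on the per-round bound versus unrolling the full sum before applying the triangle inequality), and your closing remark about avoiding the Lemma~\ref{lemma:wd} decomposition is a sensible observation the paper implicitly shares.
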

\begin{proof}
For each $(t, i, k)\in [T]\times [M]\times [\tau]$, we have
$$x_k^{t,i}=(1-\mu_{k-1}^{t,i})x_{k-1}^{t,i}-\lambda_{k-1}^{t,i}g_{k-1}^{t,i}=x_{k-1}^{t,i}- \xi_t(g_{k-1}^{t,i}, x_{k-1}^{t,i}).$$
Therefore, 
\begin{align*}
\Delta^{t,i}=x_0^{t,i}-x_\tau^{t,i}=\sum_{k=1}^{\tau}\xi_t(g_{k-1}^{t,i}, x_{k-1}^{t,i}),
\end{align*}
and 
$$\Delta^t=\frac{1}{M}\sum_{i=1}^M\Delta^{t,i}=\frac{1}{M}\sum_{i=1}^M\sum_{k=1}^{\tau}\xi_t(g_{k-1}^{t,i}, x_{k-1}^{t,i}),$$
also
$$x^t=x^{t-1}-\lambda_g\Delta^t=\cdots=x^0-\lambda_g\sum_{p=1}^{t}\Delta^p=x^0-\lambda_g\sum_{p=1}^{t}\frac{1}{M}\sum_{i=1}^M\sum_{k=1}^{\tau}\xi_p(g_{k-1}^{p,i}, x_{k-1}^{p,i}).$$
This means
$$\|x^t\|\leq \|x^0\|+\frac{\lambda_g}{M}\sum_{p=1}^{t}\sum_{i=1}^M\sum_{k=1}^{\tau}\|\xi_p(g_{k-1}^{p,i}, x_{k-1}^{p,i})\|\leq \|x^0\|+\lambda_g\tau At,$$
which is linear to $t$, so also a polynomial to $t$.
\end{proof}

\subsection{Proof of Theorem \ref{theorem:main}}
\label{sec: main proof}
To prove Theorem \ref{theorem:main}, we begin by introducing a lemma that aims to limit the discrepancy between local updates and one-step gradients. This lemma is a crucial step in the conventional approach observed in theoretical analyses of FL \cite{jhunjhunwala2023fedexp,wang2020tackling}. However, in our case, this process can be considerably simplified due to the bounded nature of our local updates. 

\subsubsection{Gap between averaged multi-step local updates and one-step gradient}
\begin{lemma}
For $(t, i)\in[T]\times [M]$, denote 
$$\beta_j^{t,i}=\lambda_j^{t,i}\prod_{r=j}^{\tau-1}(1-\mu_r^{t,i}), \qquad \beta^{t,i}=\sum_{j=0}^{\tau-1}\beta_j^{t,i}\text{,\quad and\quad} h^{t,i}=\frac{1}{\beta^{t,i}}\sum_{j=0}^{\tau-1}\beta_j^{t,i}g_j^{t,i},$$
which is the accumulated updates in client $i$ and round $t$. Then there exists constant $D>0$ such that for any $t\in[T]$, we have
$$\frac{1}{M}\sum_{i=1}^M\mathbb{E}\|h^{t,i}-\nabla F_i(x^{t-1})\|^2\leq D,$$
where the expectation is with respect to random batches given $x^{t-1}$.
\label{lemma:gap}
\end{lemma}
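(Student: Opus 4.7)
The plan is to exploit the convex-combination structure of $h^{t,i}=\tfrac{1}{\beta^{t,i}}\sum_j \beta_j^{t,i}g_j^{t,i}$ and reduce the statement to a uniform bound on $\mathbb{E}\|g_j^{t,i}-\nabla F_i(x^{t-1})\|^2$. Since the weights $\beta_j^{t,i}/\beta^{t,i}$ are nonnegative and sum to one, Jensen's inequality gives
$$\|h^{t,i}-\nabla F_i(x^{t-1})\|^2 \le \frac{1}{\beta^{t,i}}\sum_{j=0}^{\tau-1}\beta_j^{t,i}\,\|g_j^{t,i}-\nabla F_i(x^{t-1})\|^2,$$
so any bound $D$ that works for each individual summand in expectation will also transfer through the average over $i$.

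Next, I would decompose each per-iterate error as $g_j^{t,i}-\nabla F_i(x^{t-1}) = \bigl(g_j^{t,i}-\nabla F_i(x_j^{t,i})\bigr) + \bigl(\nabla F_i(x_j^{t,i})-\nabla F_i(x^{t-1})\bigr)$ and apply $\|a+b\|^2\le 2\|a\|^2+2\|b\|^2$. Conditioning on the stochastic history through iteration $j-1$ (so that $x_j^{t,i}$ is fixed), Assumption \ref{assum:unbias} says the fresh batch used to form $g_j^{t,i}$ is unbiased with variance at most $\sigma^2$, giving $2\sigma^2$ from the first piece after the tower property. The second piece is pathwise controlled by Assumption \ref{assum:smooth}: $\|\nabla F_i(x_j^{t,i})-\nabla F_i(x^{t-1})\|^2\le L^2\|x_j^{t,i}-x^{t-1}\|^2$, so everything reduces to bounding the client drift.

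This is precisely where the hypothesis of Theorem \ref{theorem:poly} pays off. Using the same identity employed in its proof, $x_j^{t,i}-x^{t-1}=-\sum_{k=1}^{j}\xi_t(g_{k-1}^{t,i},x_{k-1}^{t,i})$ with $\xi_t(g,x):=\lambda_t(g,x)g+\mu_t(g,x)x$, and the assumed $\|\xi_t(g,x)\|\le A$ yields the deterministic bound $\|x_j^{t,i}-x^{t-1}\|\le jA\le \tau A$. Substituting back gives $\mathbb{E}\|g_j^{t,i}-\nabla F_i(x^{t-1})\|^2 \le 2\sigma^2 + 2L^2\tau^2 A^2$, uniformly in $t$, $i$, $j$, so one may take $D := 2\sigma^2 + 2L^2\tau^2 A^2$ and the claim follows after averaging.

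I do not anticipate a genuine obstacle here: the lemma is essentially a bookkeeping step whose usual difficulty, bounding local drift, has been absorbed into the $A$-boundedness of $\xi_t$. The only subtlety is the order of expectation and conditioning, since $g_j^{t,i}$ and $x_j^{t,i}$ are correlated through earlier random batches; I would handle this by applying the variance bound conditionally on the history before $j$ and then taking the outer expectation, using the fact that the drift bound from Theorem \ref{theorem:poly} is pathwise, so it needs no probabilistic care.
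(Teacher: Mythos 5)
Your argument is essentially the paper's proof: Jensen over the convex weights $\beta_j^{t,i}/\beta^{t,i}$, the bias--variance split $\|a+b\|^2\le 2\|a\|^2+2\|b\|^2$ with Assumptions \ref{assum:unbias} and \ref{assum:smooth}, and the pathwise drift bound $\|x_j^{t,i}-x^{t-1}\|\le jA$ inherited from the hypothesis of Theorem \ref{theorem:poly}. The one step that is not fully justified is keeping the convex combination to claim the $\tau$-free constant $D=2\sigma^2+2L^2\tau^2A^2$: the weights $\beta_j^{t,i}/\beta^{t,i}$ are random and depend on the very batches that generate $g_j^{t,i}$ (through $\lambda_j^{t,i}$ and $\mu_r^{t,i}$), so $\mathbb{E}\bigl[\sum_j (\beta_j^{t,i}/\beta^{t,i})\,\|g_j^{t,i}-\nabla F_i(x_j^{t,i})\|^2\bigr]$ cannot be bounded by the per-term conditional variance $\sigma^2$; your conditioning remark handles the correlation between $g_j^{t,i}$ and $x_j^{t,i}$ but not the correlation with the weights. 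The paper avoids this by crudely bounding $\beta_j^{t,i}/\beta^{t,i}\le 1$ and summing over $j$, which costs a factor of $\tau$ but makes the expectation step unconditional. Since the lemma only asserts the existence of some $D>0$, this is a cosmetic fix, not a flaw in the approach.
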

\begin{proof}
By definition, we have
\begin{align}
\mathbb E\|h^{t,i}-\nabla F_i(x^{t-1})\|^2 & = 2\mathbb E\left\|\frac{1}{\beta^{t,i}}\sum_{j=0}^{\tau-1}\beta_{j}^{t,i}g_j^{t,i}-\nabla F_i(x^{t-1})\right\|^2  \nonumber\\
& =2\mathbb E \left\|\frac{1}{\beta^{t,i}}\sum_{j=0}^{\tau-1}\beta_j^{t,i}(g_j^{t,i}-\nabla F_i(x^{t-1}))\right\|^2 \nonumber\\
& \leq 2\mathbb E\sum_{j=0}^{\tau-1}\frac{\beta_j^{t,i}}{\beta^{t,i}}\left\|g_j^{t,i}-\nabla F_i(x^{t-1})\right\|^2 \nonumber\\
& \leq 2\mathbb{E}\sum_{j=0}^{\tau-1}\|g_j^{t,i}-\nabla F_i(x^{t-1})\|^2,
\label{eq: gap all}
\end{align}
where we use $\|\sum_{i=1}^n\alpha_ix_i\|^2\leq\sum_{i=1}^n\alpha_i\|x_i\|^2$ for non-negative $\sum_{i=1}^n\alpha_i=1$ in the first inequality, and $\beta_j^{t,i}/\beta^{t,i}\leq 1$ in the second inequality.

For \ref{eq: gap all}, we have
\begin{align}
\mathbb E \|g_j^{t,i}-\nabla F_i(x^{t-1})\|^2& \leq 2\mathbb E \left\|g_j^{t,i}-\nabla F_i(x_j^{t,i})\right\|^2 + 2\mathbb E \|\nabla F_i(x_j^{t,i})-\nabla F_i(x^{t-1})\|^2 \nonumber\\
& \leq 2\sigma^2 + 2L^2\mathbb E\|x_j^{t, i}-x^{t-1}\|^2 \nonumber \\
& = 2\sigma^2 + 2L^2\mathbb E \left\|\sum_{r=1}^j\xi_t(g_{r-1}^{t,i}, x_{r-1}^{t,i})\right\|^2 \nonumber \\
& = 2\sigma^2 + 2L^2j\mathbb E \sum_{r=1}^j\left\|\xi_t(g_{r-1}^{t,i}, x_{r-1}^{t,i})\right\|^2 \nonumber \\
& \leq 2\sigma^2 + 2L^2 j^2 A^2\leq 2\sigma^2 + 2L^2 \tau^2 A^2,
\label{eq: gap first}
\end{align}
where we use Assumption \ref{assum:unbias} and \ref{assum:smooth} in the second inequality. Substituting \ref{eq: gap first} into \ref{eq: gap all}, we have
\begin{align}
    \frac{1}{M}\sum_{i=1}^M\mathbb E\|h^{t,i}-\nabla F_i(x^{t-1})\|^2 &\leq \frac{2}{M}\sum_{i=1}^M\sum_{j=0}^{\tau-1}\left\|g_{j}^{t,i}-\nabla F_i(x^{t-1})\right\|^2 \nonumber\\
    &\leq \frac{2}{M}\sum_{i=1}^M\tau\left(2\sigma^2 + 2L^2\tau^2A^2\right)\nonumber\\
    &=4\tau\sigma^2 + 4L^2\tau^2A^3.
\end{align}
Taking $D=4\tau\sigma^2 + 4L^2\tau^2A^3$ finishes the proof.
\end{proof}

\subsubsection{Proof of main theorem}
Now we can prove Theorem \ref{theorem:main}.
\setcounter{theorem}{1}
\begin{theorem}
    Under Assumptions \ref{assum:smooth}, \ref{assum:unbias} and \ref{assum:hetero}, if $\lambda_t$ and $\mu_t$ satisfy the condition in Theorem \ref{theorem:poly}, and also if there exists $\{u_t\}$ with an exponential decay speed, such that for any $t\geq0$, and any $g,x\in\mathbb{R}^d, \mu_t(g,x)\leq u_t$, then the global model $\{x^t\}$ satisfies:
    $$\min_{1\leq t\leq T}\mathbb{E}\|\nabla F(x^t)\|^2\leq \mathcal{O}\left(\frac{1}{T}\right) + \mathcal{O}\left(\underbrace{L\lambda_g^2\tau^2C^2}_{\text{weight decay error}} + \underbrace{L\lambda_g^2\tau l_*^2(\tau\sigma^2+L^2\tau^2A^3+\sigma_g^2)}_{\text{client drift and data heterogeneity error}}\right),$$
    where $l_*=\max_t\{l_t\}$, with $l_t=\max_{g,x}\{\lambda_t(g,x)\}$, and $C$ satisfies $u_t\|x^t\|, u_t\|x^t\|^2\leq C$ for any $t$.
\end{theorem}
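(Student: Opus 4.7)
My plan is to start from the $L$-smoothness descent inequality
$$F(x^t) \le F(x^{t-1}) + \langle \nabla F(x^{t-1}),\, x^t - x^{t-1} \rangle + \tfrac{L}{2}\|x^t - x^{t-1}\|^2,$$
and substitute the compact representation from Lemma~\ref{lemma:wd}, $x^t - x^{t-1} = -\mu_g^{t-1} x^{t-1} - \lambda_g h^t$. The inner product then splits into a ``weight decay'' piece $-\mu_g^{t-1}\langle \nabla F(x^{t-1}), x^{t-1}\rangle$ and a ``gradient'' piece $-\lambda_g \langle \nabla F(x^{t-1}), h^t\rangle$, while the quadratic expands into $\|\mu_g^{t-1}x^{t-1}\|^2$, $\lambda_g^2\|h^t\|^2$, and a cross term. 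I will bound each piece separately, take conditional expectation given $x^{t-1}$, and telescope.

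\textbf{Gradient piece.} Writing $h^t = \bar h^t + (h^t - \bar h^t)$ with $\bar h^t := \frac{1}{M}\sum_i \beta^{t,i}\,\nabla F_i(x^{t-1})$, Young's inequality with a carefully chosen coefficient isolates a negative term proportional to $-\lambda_g \tau\,\|\nabla F(x^{t-1})\|^2$ together with residuals controlled by $\mathbb{E}\|h^{t,i} - \nabla F_i(x^{t-1})\|^2$, which Lemma~\ref{lemma:gap} bounds by $D = \mathcal{O}(\tau\sigma^2 + L^2\tau^2 A^3)$. The heterogeneity $\sigma_g^2$ enters via Assumption~\ref{assum:hetero} when converting the per-client gradient norms $\frac{1}{M}\sum_i\|\nabla F_i(x^{t-1})\|^2$ into $\|\nabla F(x^{t-1})\|^2 + \sigma_g^2$, and the prefactor $l_*^2$ arises by bounding $\beta_j^{t,i}\le l_*$ uniformly in the quadratic term $\lambda_g^2\|h^t\|^2$.

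\textbf{Weight decay piece.} This is where the novel part of the argument lies. Cauchy--Schwarz gives $|\mu_g^{t-1}\langle \nabla F(x^{t-1}), x^{t-1}\rangle| \le \tfrac{\lambda_g\tau}{4}\|\nabla F(x^{t-1})\|^2 + \tfrac{1}{\lambda_g\tau}\|\mu_g^{t-1} x^{t-1}\|^2$; the first term is absorbed by halving the descent constant, and the second is bounded exactly by the computation sketched in the excerpt: since $\mu_j^{t,i}\le u_t$ and $(1-u_t)^\tau\ge 1-\tau u_t$, one obtains $\|\mu_g^{t-1} x^{t-1}\|\le \lambda_g\tau u_t\,\|x^{t-1}\|\le \lambda_g\tau C$, where the last inequality uses the exponential decay of $\{u_t\}$ combined with the polynomial growth of $\|x^t\|$ from Theorem~\ref{theorem:poly}. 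The analogous quadratic $(\mu_g^{t-1})^2\|x^{t-1}\|^2\le \lambda_g^2\tau^2 u_t\cdot u_t\|x^{t-1}\|^2$ uses the second half of the hypothesis $u_t\|x^t\|^2\le C$. The cross term $2\lambda_g \mu_g^{t-1}\langle x^{t-1}, h^t\rangle$ splits by Young's inequality into pieces already absorbed by $\|\mu_g^{t-1}x^{t-1}\|^2$ and by $\lambda_g^2\|h^t\|^2$.

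\textbf{Assembly and main obstacle.} After consolidation, the per-round descent has the form $\mathbb{E}[F(x^t)-F(x^{t-1})] \le -c\,\lambda_g \tau\,\mathbb{E}\|\nabla F(x^{t-1})\|^2 + \mathcal{O}(L\lambda_g^2\tau^2 C^2) + \mathcal{O}(L\lambda_g^2\tau\, l_*^2(\tau\sigma^2 + L^2\tau^2 A^3 + \sigma_g^2))$ for some $c>0$; telescoping $t=1,\ldots,T$, dividing by $cT\lambda_g\tau$, and using $\min_t \le \mathrm{avg}_t$ produces the advertised bound, with the $\mathcal{O}(1/T)$ term coming from the telescoped $F(x^0)-F^\star$ gap. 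The main obstacle is the bookkeeping under adaptivity: because $\lambda_j^{t,i}$ and $\mu_j^{t,i}$ depend on random gradients and iterates, the weights $\beta_j^{t,i}$ inside $h^t$ are themselves random, so every Cauchy--Schwarz step must be threaded through the uniform bounds $\lambda_j^{t,i}\le l_*$ and $\|\xi_t(g,x)\|\le A$ without inflating the $\lambda_g^2\tau^2$ prefactor. Equally delicate is calibrating Young's-inequality constants so that the absorption of $\tfrac{\lambda_g\tau}{4}\|\nabla F(x^{t-1})\|^2$ from the weight-decay cross term does not consume the principal descent entirely, which is what allows the weight decay error to appear cleanly as $L\lambda_g^2\tau^2 C^2$ rather than being intertwined with the client-drift term.
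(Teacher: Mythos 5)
Your proposal follows essentially the same route as the paper's proof: the $L$-smoothness descent inequality combined with the Lemma~\ref{lemma:wd} decomposition $x^t-x^{t-1}=-\mu_g^{t-1}x^{t-1}-\lambda_g h^t$, separate Young/Cauchy--Schwarz bounds on the gradient and weight-decay pieces (with Lemma~\ref{lemma:gap} and Assumption~\ref{assum:hetero} controlling the residuals, and $u_t\|x^t\|,\,u_t\|x^t\|^2\le C$ controlling the decay terms), followed by telescoping. The one point you elide, which the paper also must treat separately, is that the negative descent coefficient is $\frac{\lambda_g}{2M}\sum_i\mathbb{E}\,\beta^{t,i}$ rather than a fixed multiple of $\lambda_g\tau$ (adaptive co-clipping can shrink $\lambda_j^{t,i}$), so one still needs a positive lower bound on $\mathbb{E}\,\beta^{t,i}$ before dividing through, which the paper supplies in a separate appendix lemma for the specific FedNAR choice of $\lambda_t,\mu_t$.
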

\begin{proof}
    By Lemma \ref{lemma:wd}, 
    \begin{align*}
    x^t = x^{t-1} - \left(\lambda_gh^t + \mu_g^{t-1}x^{t-1}\right).
    \end{align*}
    Under Assumption \ref{assum:smooth} ($L$-smooth), we have
    \begin{align}
        & F(x^t)-F(x^{t-1}) \leq -\langle\nabla F(x^{t-1}), \lambda_gh^t+\mu_g^{t-1}x^{t-1}\rangle + \frac{L}{2}\left\|\lambda_gh^t+\mu_g^{t-1}x^{t-1}\right\|^2 \nonumber\\
        &\qquad\leq \underbrace{-\langle\nabla F(x^{t-1}), \lambda_gh^t\rangle}_{T_1} \underbrace{- \langle\nabla F(x^{t-1}),\mu_g^{t-1}x^{t-1}\rangle}_{T_2} + \underbrace{L\lambda_g^2\|h^t\|^2}_{T_3} + \underbrace{L{\mu_g^{t-1}}^2\|x^{t-1}\|^2}_{T_4}. \nonumber
    \end{align}
    We bound these four terms successively. We have
    \begin{align}
        T_1 & = -\lambda_g\left\langle\nabla F(x^{t-1}), \frac{1}{M}\sum_{i=1}^M\beta^{t,i}h^{t,i}\right\rangle \nonumber\\
        & = -\lambda_g\frac{1}{M}\sum_{i=1}^M\beta^{t,i}\langle\nabla F(x^{t-1}), h^{t,i}\rangle \nonumber\\
        & \leq -\lambda_g\frac{1}{2M}\sum_{i=1}^M\beta^{t,i}\left(\|\nabla F(x^{t-1})\|^2 - \left\|\nabla F(x^{t-1})-h^{t,i}\right\|\right) \nonumber \\
        & = -\frac{\lambda_g}{2M}\left\|\nabla F(x^{t-1})\right\|^2\sum_{i=1}^M\beta^{t,i} + \frac{\lambda_g}{2M}\sum_{i=1}^M\beta^{t,i}\left\|\nabla F(x^{t-1})-h^{t,i}\right\| \nonumber\\
        & \leq -\frac{\lambda_g}{2M}\left\|\nabla F(x^{t-1})\right\|^2\sum_{i=1}^M\beta^{t,i} + \frac{\lambda_g}{2M}\sum_{i=1}^M\beta^{t,i}\|\nabla F_i(x^{t-1})-h^{t,i}\|^2 \nonumber \\
        & \qquad\qquad\qquad + \frac{\lambda_g}{2M}\sum_{i=1}^M\beta^{t,i}\|\nabla F(x^{t-1})-\nabla F_i(x^{t-1})\|^2 \nonumber \\
        & \leq -\frac{\lambda_g}{2M}\left\|\nabla F(x^{t-1})\right\|^2\sum_{i=1}^M\beta^{t,i} + \frac{\lambda_g\tau l_*}{2M}\sum_{i=1}^M\|\nabla F_i(x^{t-1})-h^{t,i}\|^2 + \frac{\lambda_g}{2}\tau l_*\sigma_g^2,
    \end{align}
    where we use $2\langle a,b\rangle\geq \|a\|^2-\|a-b\|^2$ in the first inequality and use Assumption \ref{assum:hetero} in the last inequality.
    \begin{align}
        T_2 & = - \langle\nabla F(x^{t-1}),\mu_g^{t-1}x^{t-1}\rangle \\
        &\leq \frac{\mu_g^{t-1}}{2}\|\nabla F(x^{t-1})\|^2 + \frac{\mu_g^{t-1}}{2}\left\|x^{t-1}\right\|^2 \nonumber\\
        & = \frac{\lambda_g}{2}\left(1-\frac{1}{M}\sum_{i=1}^M\prod_{j=0}^{\tau-1}\left(1-\mu_j^{t,i}\right)\right)\left(\|\nabla F(x^{t-1})\|^2 + \left\|x^{t-1}\right\|^2 \right)\nonumber\\
        & \leq \frac{\lambda_g}{2}\left(1-\frac{1}{M}\sum_{i=1}^M(1-u_t)^\tau\right)\left(\|\nabla F(x^{t-1})\|^2 + \left\|x^{t-1}\right\|^2 \right) \nonumber \\
        & \leq \frac{\lambda_g}{2}\left(1-\frac{1}{M}\sum_{i=1}^M(1-\tau u_t)\right)\left(\|\nabla F(x^{t-1})\|^2 + \left\|x^{t-1}\right\|^2 \right) \nonumber\\
        & =  \frac{\lambda_g}{2}\tau u_t\left(\|\nabla F(x^{t-1})\|^2 + \left\|x^{t-1}\right\|^2 \right) \nonumber \\
        & \leq \frac{\lambda_g}{2}\tau u_t\|\nabla F(x^{t-1})\|^2 + \frac{\lambda_g}{2}\tau C  
    \end{align}
    For the first inequality, we use $-2\langle a,b\rangle\leq \|a\|^2+\|b\|^2$. 
    \begin{align}
        T_3 & = L\lambda_g^2\|h^t\|^2 \\
        & \leq \frac{L\lambda_g^2}{M}\sum_{i=1}^M\left\|\beta^{t,i}h^{t,i}\right\|^2 \nonumber \\
        & \leq \frac{3L\lambda_g^2\tau l_*^2}{M}\sum_{i=1}^M\left\|h^{t,i}-\nabla F_i(x^{t-1})\right\|^2 \nonumber\\
        & \qquad\qquad\qquad+ \frac{3L\lambda_g^2\tau l_*^2}{M}\sum_{i=1}^M\|\nabla F_i(x^{t-1})-\nabla F(x^{t-1})\|^2 + 3L\lambda_g^2\tau l_*^2\|\nabla F(x^{t-1})\|^2 \nonumber\\
        & \leq \frac{3L\lambda_g^2\tau l_*^2}{M}\sum_{i=1}^M\left\|h^{t,i}-\nabla F_i(x^{t-1})\right\|^2 + 3L\lambda_g^2 \sigma_g^2\tau l_*^2 + 3L\lambda_g^2\tau l_*^2\|\nabla F(x^{t-1})\|^2,
    \end{align}
    where we use Assumption \ref{assum:hetero} in the last equation, and 
    \begin{align}
        T_4 \leq L\lambda_g^2\tau^2u_{t-1}^2\|x^{t-1}\|^2 \leq L\lambda_g^2\tau^2 C^2.
    \end{align}
    Combining these together and taking expectations, we have
    \begin{align}
        \mathbb E\left(F(x^t) - F(x^{t-1})\right)& \leq \left(-\frac{\lambda_g}{2}\frac{1}{M}\sum_{i=1}^M\mathbb E\beta^{t,i} + 3L\lambda_g^2\tau l_*^2+\frac{\lambda_g}{2}\tau u_t\right)\|\nabla F(x^{t-1})\|^2 \nonumber\\
        &\qquad + \left(\frac{\lambda_g\tau l_*}{2}+3L\lambda_g^2\tau l_*^2\right)\frac{1}{M}\sum_{i=1}^M\left\|h^{t,i}-\nabla F_i(x^{t-1})\right\|^2 \nonumber \\
        & \qquad + \frac{\lambda_g}{2}\tau l_*\sigma_g^2 \nonumber 
        + \frac{\lambda_g}{2}\tau C + 3L\lambda_g^2 \sigma_g^2\tau l_*^2 + L\lambda_g^2\tau^2 C^2 \\
        & \leq \left(-\frac{\lambda_g}{2}\frac{1}{M}\sum_{i=1}^M\mathbb E\beta^{t,i} + 3L\lambda_g^2\tau l_*^2 + \frac{\lambda_g}{2}\tau u_t\right)\|\nabla F(x^{t-1})\|^2 + H,
    \end{align}
    where 
    \begin{align*}
        H &:= \left(\frac{\lambda_g\tau l_*}{2}+3L\lambda_g^2\tau l_*^2\right)D + \frac{\lambda_g}{2}\tau l_*\sigma_g^2 + \frac{\lambda_g}{2}\tau C + 3L\lambda_g^2 \sigma_g^2\tau l_*^2 + L\lambda_g^2\tau^2 C^2,
    \end{align*}
    by Lemma \ref{lemma:gap}. Taking summation across all rounds $1\leq t\leq T$, we have
    \begin{align}
        \sum_{t=1}^T\left(\frac{\lambda_g}{2}\frac{1}{M}\sum_{i=1}^M\mathbb E\beta^{t,i} - 3L\lambda_g^2\tau l_*^2\right)\mathbb E\|\nabla F(x^{t-1})\|^2\leq F(x^0) - \mathbb E F(X^T) + HT.
    \end{align}
    Denote 
    \begin{align}
    \eta_t=\frac{\lambda_g}{2}\frac{1}{M}\sum_{i=1}^M\mathbb E\beta^{t,i} - 3L\lambda_g^2\tau l_*^2.\label{eq: etat}
    \end{align}
    Let $\delta>0$ satisfy that $\eta_t>\delta$.\footnote{We will provide a specific lower bound for $\mathbb E\beta^{t,i}$ for FedNAR in Section \ref{sec: verify}.} We have
    \begin{align}
        \sum_{t=1}^{T}\delta\mathbb{E}\|\nabla F(x^{t-1})\|^2\leq\sum_{t=1}^{T}\eta_t\mathbb E\|\nabla F(x^{t-1})\|^2\leq F(x^0) + HT.
    \end{align}
    Therefore,
    \begin{align}
        \min_{1\leq t\leq T}\mathbb{E}\|\nabla F(x^{t-1})\|^2\leq \frac{F(x^0)}{\delta T} + \frac{H}{\delta}.
    \end{align}
    This finishes the proof.
\end{proof}

\subsection{Validation of FedNAR}
\label{sec: verify}
In Section \ref{sec: main proof}, we presented the proof of the main theorem. Building upon the theoretical analysis, we subsequently introduce our algorithm, FedNAR. In this section, we proceed to verify that FedNAR upholds certain properties. Additionally, we provide a specific example to illustrate the quantities utilized in the proof outlined in Section \ref{sec: main proof}.

\subsubsection{Choices of $\lambda$ and $\mu$}
\label{sec: lambda mu appendix}
Recall in the main body, we choose
\begin{align}
\lambda_t(g,x):=
\begin{cases}
  l_t\cdot A / \|g + u_t x/l_t\|, & \text{if } \|g + u_tx/l_t\| > A \\
  l_t, & \text{otherwise}
\end{cases}
,
\label{eq: lambda rho}
\end{align}
and
\begin{align}
\mu_t(g,x):=
\begin{cases}
  u_t\cdot A / \|g + u_tx/l_t\|, & \text{if } \|g + u_tx/l_t\| > A \\
  u_t, & \text{otherwise}
\end{cases}
.
\label{eq: mu rho}
\end{align}
We now verify that they satisfy Condition \ref{condition:cond}. First if $\|g + u_tx/l_t\| > A$, we get
\begin{align}
    \|\lambda_t(g,x)g + \mu_t(g,x)x\| & = \left\|l_t\frac{Ag}{\|g+\frac{u_t}{l_t}x\|} + u_t\frac{Ax}{\|g+\frac{u_t}{l_t}x\|}\right\| \nonumber\\
    & = \frac{\left\|l_tAg+u_tAx\right\|}{\|g+\frac{u_t}{l_t}x\|} =l_tA\leq l_*A\nonumber. 
\end{align}
Otherwise, if $\|g + u_tx/l_t\| \leq A$, we naturally get 
\begin{align*}
    \|\lambda_t(g,x)g + \mu_t(g,x)x\| = \|l_tg + u_tx\| \leq l_tA\leq l_*A.
\end{align*}
So our choices satisfy Condition \ref{condition:cond}.

\subsubsection{Lower bound of $\mathbb E\beta^{t,i}$}
\label{sec: lower ebeta}
As outlined in Section \ref{sec: verify}, it is necessary to establish a lower bound for $\mathbb{E}\beta^{t,i}$. We now proceed to present a specific and explicit lower bound for $\mathbb{E}\beta^{t,i}$ in the context of FedNAR, utilizing the given choices of $\lambda$ and $\mu$.

\begin{lemma}
    There exists constants $P, Q>0$, such that for any $t\in[T]$, we have
    $$\frac{1}{M}\sum_{i=1}^M\mathbb E\beta^{t,i}\geq \frac{1}{Pt+Q}\frac{(1-u_t)(1-(1-u_t)^\tau)}{u_t}l_t,$$
    where $\beta^{t,i}$ is defined in Lemma \ref{lemma:gap}, and expectation is taken with respect to random batches given $x^{t-1}$.
    \label{lemma: lower}
\end{lemma}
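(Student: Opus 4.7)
My plan is to split $\beta_j^{t,i}=\lambda_j^{t,i}\prod_{r=j}^{\tau-1}(1-\mu_r^{t,i})$ into a deterministic lower bound on the product factor and a probabilistic lower bound on the $\lambda_j^{t,i}$ factor. Since $\mu_t(g,x)\leq u_t$ uniformly in $(g,x)$, I immediately get $1-\mu_r^{t,i}\geq 1-u_t$ and hence $\beta_j^{t,i}\geq \lambda_j^{t,i}(1-u_t)^{\tau-j}$ pointwise. Both factors are non-negative, so taking expectation and summing over $j$ yields
\begin{align*}
\mathbb{E}\beta^{t,i}\geq \sum_{j=0}^{\tau-1}\mathbb{E}[\lambda_j^{t,i}]\,(1-u_t)^{\tau-j}.
\end{align*}
It therefore suffices to produce constants $P,Q>0$ with $\mathbb{E}[\lambda_j^{t,i}]\geq l_t/(Pt+Q)$, uniformly in $j\in\{0,\dots,\tau-1\}$ and $i\in[M]$; the claimed bound then follows by evaluating the geometric series $\sum_{j=0}^{\tau-1}(1-u_t)^{\tau-j}=\sum_{k=1}^{\tau}(1-u_t)^k=(1-u_t)(1-(1-u_t)^\tau)/u_t$ and averaging over $i$.

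\paragraph{Bounding $\mathbb{E}[\lambda_j^{t,i}]$.} From the definition \eqref{eq:lambda}, $\lambda_j^{t,i}=l_t\min\{1,\,A/\|g_j^{t,i}+u_tx_j^{t,i}/l_t\|\}$, and the elementary inequality $\min(1,A/y)\geq A/(A+y)$ for $y\geq 0$ gives $\lambda_j^{t,i}\geq l_tA/(A+Y_j^{t,i})$ with $Y_j^{t,i}:=\|g_j^{t,i}+u_tx_j^{t,i}/l_t\|$. Applying Jensen's inequality to the convex map $y\mapsto 1/(A+y)$ together with the triangle inequality,
\begin{align*}
\mathbb{E}[\lambda_j^{t,i}]\geq \frac{l_tA}{A+\mathbb{E}Y_j^{t,i}}\geq \frac{l_tA}{A+\mathbb{E}\|g_j^{t,i}\|+(u_t/l_t)\mathbb{E}\|x_j^{t,i}\|}.
\end{align*}
Theorem~\ref{theorem:poly} and the deterministic bound $\|\xi_t(g,x)\|\leq A$ used in its proof yield the envelope $\|x_j^{t,i}\|\leq \|x^{t-1}\|+jA\leq \|x^0\|+\lambda_g\tau A(t-1)+jA$, which is linear in $t$. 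Combining Assumption~\ref{assum:smooth} and Assumption~\ref{assum:unbias} gives $\mathbb{E}\|g_j^{t,i}\|^2\leq \sigma^2+\mathbb{E}\|\nabla F_i(x_j^{t,i})\|^2\leq \sigma^2+2\|\nabla F_i(0)\|^2+2L^2\mathbb{E}\|x_j^{t,i}\|^2$, and then $\mathbb{E}\|g_j^{t,i}\|\leq\sqrt{\mathbb{E}\|g_j^{t,i}\|^2}$ is likewise linear in $t$. Taking worst-case constants across the finite client set $[M]$ and across $j\in\{0,\dots,\tau-1\}$ absorbs every $i$- and $j$-dependent quantity into a single affine denominator $Pt+Q$, as required.

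\paragraph{Main obstacle.} The delicate point is controlling $\mathbb{E}\|g_j^{t,i}\|$: the iterate $x_j^{t,i}$ is itself random through the preceding within-round stochastic batches, so I cannot fall back on a centralized smoothness-plus-variance estimate at a fixed point. What rescues the argument is precisely the deterministic envelope $\|x_j^{t,i}\|\leq\|x^{t-1}\|+jA$ coming from $\|\xi_t\|\leq A$, which converts the smoothness inequality into a pointwise bound on $\|\nabla F_i(x_j^{t,i})\|^2$ that is already linear in $t^2$; after this, expectation commutes through trivially and the remainder of the argument is bookkeeping around the geometric sum and the Jensen step. A minor secondary subtlety is keeping the $\|\nabla F_i(0)\|$ terms controlled uniformly in $i$, but this is immediate since the client set is finite.
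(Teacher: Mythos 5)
Your proposal is correct and follows essentially the same route as the paper's proof: the same pointwise reduction $\beta_j^{t,i}\geq\lambda_j^{t,i}(1-u_t)^{\tau-j}$, the same Jensen-type step (your $\min(1,A/y)\geq A/(A+y)$ plus Jensen on $y\mapsto 1/(A+y)$ is equivalent to the paper's $\mathbb{E}X\cdot\mathbb{E}(1/X)\geq 1$ followed by $\max\{a,b\}\leq a+b$), the same deterministic linear-in-$t$ envelope on $\|x_j^{t,i}\|$ coming from $\|\xi_t\|\leq A$, and the same geometric sum at the end. The only cosmetic deviations are that you anchor the smoothness bound for $\mathbb{E}\|\nabla F_i(x_j^{t,i})\|$ at the origin via $\|\nabla F_i(0)\|$ and a second-moment/Cauchy--Schwarz detour, whereas the paper anchors at $x_0^{t,i}$ and $x^0$ and invokes Assumption~\ref{assum:hetero}; you should also state explicitly that $u_t/l_t\leq u_0/\min_s l_s$ so that this factor is absorbed into the constants $P,Q$, as the paper does via $\varepsilon=\min\{l_t\}>0$.
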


\begin{proof}
    With the definition of $\beta^{t,i}$ and $\lambda, \mu$, we have
    \begin{align}
        \mathbb{E}\beta^{t,i} = \sum_{j=0}^{\tau-1}\mathbb{E}\beta_j^{t,i} & = \sum_{j=0}^{\tau-1}\mathbb{E}\left(\lambda_j^{t,i}\prod_{r=j}^{\tau-1}\left(1-\mu_r^{t,i}\right)\right) \nonumber\\
        & \geq \sum_{j=0}^{\tau-1}\mathbb{E}\left(\lambda_j^{t,i}(1-u_t)^{\tau-j}\right) \nonumber\\
        & = \sum_{j=0}^{\tau-1}\mathbb{E}\left(\min\left\{\frac{l_tA}{\left\|g_j^{t,i}+\frac{u_t}{l_t}x_j^{t,i}\right\|}, l_t\right\}\right)(1-u_t)^{\tau-j} \nonumber\\
        & = \sum_{j=0}^{\tau-1} \mathbb{E}\left(\min\left\{\frac{A}{\left\|g_j^{t,i}+\frac{u_t}{l_t}x_j^{t,i}\right\|}, 1\right\} \right) (1-u_t)^{\tau-j}l_t\nonumber \\
        & = \sum_{j=0}^{\tau-1}\mathbb{E}\left(\frac{A}{\max\left\{\left\|g_j^{t,i}+\frac{u_t}{l_t}x_j^{t,i}\right\|, A\right\}}\right)(1-u_t)^{\tau-j}l_t \nonumber \\
        & \geq \sum_{j=0}^{\tau-1}\frac{A}{\mathbb{E}\max\left\{\left\|g_j^{t,i}+\frac{u_t}{l_t}x_j^{t,i}\right\|, A\right\}}(1-u_t)^{\tau-j}l_t \label{eq: e1/x} \\
        & \geq \sum_{j=0}^{\tau-1}\frac{A}{\mathbb{E}\left\|g_j^{t,i}+\frac{u_t}{l_t}x_j^{t,i}\right\| + A}(1-u_t)^{\tau-j}l_t \label{eq: ebeta}.
    \end{align}
    Here \ref{eq: e1/x} is from $\mathbb EX\cdot\mathbb E(1/X) \geq 1$ for positive random variable $X$, and \ref{eq: ebeta} is from $\max\{a,b\}\leq a+b$. Next we give an upper bound for $\mathbb E\|g_j^{t,i}+\frac{u_t}{l_t}x_j^{t,i}\|$. We have
    \begin{align}
        \mathbb E\left\|g_j^{t,i}+\frac{u_t}{l_t}x_j^{t,i}\right\| &\leq \mathbb E\|g_j^{t,i}\| + \frac{u_0}{\varepsilon}\mathbb E\|x_j^{t,i}\| \nonumber\\
        & \leq \mathbb E\|g_j^{t,i}-\nabla F_i(x_j^{t,i})\| + \mathbb E\|\nabla F_i(x_j^{t,i})\| + \frac{u_0}{\varepsilon}\mathbb E\|x_j^{t,i}\| \nonumber\\
        & \leq \sigma^2 + \mathbb E\|\nabla F_i(x_j^{t,i})\| + \frac{u_0}{\varepsilon}\mathbb E\|x_j^{t,i}\|.
        \label{eq: gu0epsx}
    \end{align}
    Here we denote $\varepsilon=\min\{l_t\}>0$. For the second term in \ref{eq: gu0epsx}, by Assumption \ref{assum:smooth}, we get 
    \begin{align}
        \mathbb E \|\nabla F_i(x_j^{t,i})\| &\leq \mathbb E \|\nabla F_i(x_0^{t,i})\| + L\mathbb E \|x_j^{t,i}-x_0^{t,i}\| \nonumber \\
        & \leq \mathbb E \|\nabla F_i(x_0^{t,i})\| + LjA \label{eq:2} \\
        & \leq \mathbb E\|\nabla F_i(x_0^{t,i})-\nabla F(x_0^{t,i})\| + \|\nabla F(x_0^{t,i})\|+LjA \nonumber \\
        & \leq \sum_{i=1}^M \mathbb E\|\nabla F_i(x_0^{t,i})-\nabla F(x_0^{t,i})\| + \|\nabla F(x_0^{t,i})\|+LjA \nonumber \\
        & \leq \sqrt{M}\sqrt{\sum_{i=1}^M \mathbb E\|\nabla F_i(x_0^{t,i})-\nabla F(x_0^{t,i})\|^2} + \|\nabla F(x_0^{t,i})\|+LjA \label{eq:5} \\
        & \leq M\sigma_g + \|\nabla F(x_0^{t,i})\| + LjA \label{eq:6} \\
        & \leq M\sigma_g + \|\nabla F(x^0)\| + L\|x^{t-1}-x^0\| + LjA \nonumber \\
        & \leq M\sigma_g + \|\nabla F(x^0)\| + L\lambda_g\tau At + L\tau A \label{eq:8},
    \end{align}
    where \ref{eq:2} uses similar techniques as the proof of Theorem \ref{theorem:poly}, \ref{eq:5} uses Cauchy inequality, \ref{eq:6} follows from Assumption \ref{assum:hetero}.
    For the third term in \ref{eq: gu0epsx}, we get
    \begin{align}
        \mathbb E\|x_j^{t,i}\|&\leq \mathbb E\|x_0^{t,i}\| + \mathbb E\|x_j^{t,i}-x_0^{t,i}\| \nonumber\\
        & \leq \|\nabla F(x^0)\| + L\lambda_g\tau At
        \label{eq:u0epsx}
    \end{align}
    Substituting \ref{eq:u0epsx} and \ref{eq:8} into \ref{eq: gu0epsx}, we get 
    \begin{align}
        \mathbb E\left\|g_j^{t,i}+\frac{u_t}{l_t}x_j^{t,i}\right\|
        \leq \underbrace{\left(1+\frac{u_0}{\varepsilon}\right)L\lambda_g\tau A}_{\tilde P} t + \underbrace{M\sigma_g + \left(1+\frac{u_0}{\varepsilon}\right)\|\nabla F(x^0)\| + L\tau A}_{\tilde Q}
        \label{eq: pq}
    \end{align}
    Take $\tilde P, \tilde Q$ as shown in \ref{eq: pq}, and set $P=\tilde P / A$, $Q=(\tilde Q+A)/A$. Substituting \ref{eq: pq} into \ref{eq: ebeta}, we have
    \begin{align}
        \mathbb E \beta^{t,i}\geq \sum_{j=0}^{\tau-1}\frac{1}{Pt+Q}(1-u_t)^{\tau-j}l_t = \frac{1}{Pt+Q}\frac{(1-u_t)(1-(1-u_t)^\tau)}{u_t}l_t.
        \nonumber
    \end{align}
    This finishes the proof.
\end{proof}

Additionally, we get the following natural corollary that eliminates $u_t, l_t$.
\begin{corollary}
    Denote $\varepsilon=\min\{l_t\}>0$, then there exists a constant $P, Q>0$, such that for any $t\in[T]$, we have
    $$\frac{1}{M}\sum_{i=1}^M\mathbb E\beta^{t,i}\geq \frac{1}{Pt+Q}\frac{(1-u_0)(1-(1-u_0)^\tau)}{u_0}\varepsilon,$$
    where $\beta^{t,i}$ is defined in Lemma \ref{lemma:gap}, and expectation is taken with respect to random batches given $x^{t-1}$.
\end{corollary}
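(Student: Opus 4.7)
}
The goal is to establish a lower bound on $\frac{1}{M}\sum_{i=1}^M\mathbb{E}\beta^{t,i}$ for the specific choices \ref{eq: lambda rho} and \ref{eq: mu rho} of $\lambda_j^{t,i}$ and $\mu_r^{t,i}$. My plan is to first strip out the weight-decay product factor using the uniform bound $\mu_r^{t,i}\leq u_t$, which gives $\prod_{r=j}^{\tau-1}(1-\mu_r^{t,i})\geq (1-u_t)^{\tau-j}$. Second, I rewrite the FedNAR choice as $\lambda_j^{t,i}=l_t A/\max\{\|g_j^{t,i}+\tfrac{u_t}{l_t}x_j^{t,i}\|,A\}$, which lets me take expectations via Jensen's inequality for $1/X$: $\mathbb{E}[1/X]\geq 1/\mathbb{E}[X]$ for any positive random variable $X$. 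Together with the elementary inequality $\max\{a,b\}\leq a+b$, this reduces the problem to producing a linear-in-$t$ upper bound on $\mathbb{E}\|g_j^{t,i}+\tfrac{u_t}{l_t}x_j^{t,i}\|$.

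The main technical step is therefore controlling $\mathbb{E}\|g_j^{t,i}\|$ and $\mathbb{E}\|x_j^{t,i}\|$ along a local trajectory. For the iterate norm, I reuse the argument from Theorem \ref{theorem:poly}: since each local update is bounded by $A$ in norm, $\|x_j^{t,i}\|\leq\|x^{t-1}\|+jA$ and $\|x^{t-1}\|\leq\|x^0\|+\lambda_g\tau A(t-1)$, yielding a linear-in-$t$ bound. For the gradient norm, I apply $L$-smoothness to relate $\|\nabla F_i(x_j^{t,i})\|$ to $\|\nabla F_i(x^{t-1})\|$ (gaining at most $LjA\leq L\tau A$), decompose $\|\nabla F_i(x^{t-1})\|$ via the triangle inequality into a full-gradient part $\|\nabla F(x^{t-1})\|$ and a heterogeneity part, and bound the heterogeneity part by $\sigma_g\sqrt{M}$ using Assumption \ref{assum:hetero} together with Cauchy--Schwarz. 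The full gradient $\|\nabla F(x^{t-1})\|$ is in turn bounded by $\|\nabla F(x^0)\|+L\|x^{t-1}-x^0\|$, which is linear in $t$. The stochastic-to-deterministic gap is absorbed by Assumption \ref{assum:unbias} (Jensen gives $\mathbb{E}\|g_j^{t,i}-\nabla F_i(x_j^{t,i})\|\leq\sigma$).

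Combining these bounds, I obtain $\mathbb{E}\|g_j^{t,i}+\tfrac{u_t}{l_t}x_j^{t,i}\|\leq \tilde P t+\tilde Q$ for explicit constants $\tilde P,\tilde Q$ depending on $L,\lambda_g,\tau,A,\sigma_g,M,\|\nabla F(x^0)\|,u_0,\varepsilon:=\min_t l_t$. Setting $P=\tilde P/A$ and $Q=(\tilde Q+A)/A$, the expected inverse bound gives $\mathbb{E}\beta_j^{t,i}\geq (Pt+Q)^{-1}(1-u_t)^{\tau-j}l_t$. Summing the geometric series $\sum_{j=0}^{\tau-1}(1-u_t)^{\tau-j}=\frac{(1-u_t)(1-(1-u_t)^\tau)}{u_t}$ and averaging over $i$ (the bound is uniform in $i$) yields the claimed inequality.

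The main obstacle I anticipate is the bookkeeping required to show that every term blows up no faster than linearly in $t$: the iterate norm grows linearly by Theorem \ref{theorem:poly}, and feeding this growth into smoothness must not produce quadratic dependence. The key observation that avoids this is that $L\|x^{t-1}-x^0\|$ is itself linear in $t$, so $\|\nabla F(x^{t-1})\|$ remains linear; a careless use of smoothness around $x_0^{t,i}$ versus $x^0$ could otherwise compound $L\cdot t$ factors. A secondary subtlety is the dependence on $\varepsilon=\min_t l_t>0$ in the coefficient $u_t/l_t$: this is finite only because $l_t$ is assumed bounded away from $0$, and must be treated as a fixed constant rather than a $t$-dependent quantity when forming $P$ and $Q$.
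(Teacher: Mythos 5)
Your write-up is a correct reconstruction of the paper's proof of Lemma \ref{lemma: lower}, and it follows essentially the same route the paper takes there: lower-bounding the decay product by $(1-u_t)^{\tau-j}$, rewriting the FedNAR learning rate as $l_tA/\max\{\|g+\tfrac{u_t}{l_t}x\|,A\}$, applying $\mathbb E[1/X]\geq 1/\mathbb E[X]$ and $\max\{a,b\}\leq a+b$, and then establishing the linear-in-$t$ bound on $\mathbb E\|g_j^{t,i}+\tfrac{u_t}{l_t}x_j^{t,i}\|$ via smoothness, Cauchy--Schwarz with Assumption \ref{assum:hetero}, and the bounded-update argument from Theorem \ref{theorem:poly}. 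Your caveats about avoiding compounded $L\cdot t$ factors and about treating $\varepsilon=\min_t l_t$ as a fixed constant are both on point.

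However, the statement you were asked to prove is the \emph{Corollary}, not the Lemma, and your argument stops at the Lemma's bound $\frac{1}{Pt+Q}\frac{(1-u_t)(1-(1-u_t)^\tau)}{u_t}l_t$. The Corollary replaces the $t$-dependent prefactor by the $t$-independent quantity $\frac{(1-u_0)(1-(1-u_0)^\tau)}{u_0}\varepsilon$, and this passage is not automatic. The substitutions $l_t\geq\varepsilon$ and $1-u_t\geq 1-u_0$ are immediate, but you still need
$$\frac{1-(1-u_t)^\tau}{u_t}\;\geq\;\frac{1-(1-u_0)^\tau}{u_0},$$
i.e.\ that $f(x):=\frac{1-(1-x)^\tau}{x}$ is nonincreasing on $(0,u_0]$ (since $u_t=u_0\gamma^t\leq u_0$). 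This is where the paper spends its entire proof of the Corollary: writing $f'(x)=\frac{g(x)}{x^2}$ with $g(x)=\tau(1-x)^{\tau-1}x-1+(1-x)^\tau$, one checks $g(0)=0$ and $g'(x)=-\tau(\tau-1)(1-x)^{\tau-2}x\leq 0$, hence $f'\leq 0$ and $f(u_t)\geq f(u_0)$. Without this monotonicity step the claimed inequality does not follow from the Lemma, so you should append it to close the gap.
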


\begin{proof}
    By Lemma \ref{lemma: lower}, we have
    $$\frac{1}{M}\sum_{i=1}^M\mathbb E\beta^{t,i}\geq \frac{1}{Pt+Q}\frac{(1-u_0)(1-(1-u_t)^\tau)}{u_t}\varepsilon.$$
    It remains to prove that the function 
    $$f(x):= \frac{1-(1-x)^\tau}{x}$$
    decreases for $x\in(0,u_0]$. To show this, we take the derivative:
    \begin{align}
        f'(x) = \frac{\tau(1-x)^{\tau-1}x -1 + (1-x)^\tau}{x^2}.\nonumber
    \end{align}
    Denote $g(x)$ to be the numerator of $f'(x)$, we have
    \begin{align}
        g'(x) &= -\tau(\tau-1)(1-x)^{\tau-2}x + \tau(1-x)^{\tau-1} - \tau(1-x)^{\tau-1} \nonumber\\
        &= -\tau(\tau-1)(1-x)^{\tau-2}x \leq 0\nonumber.
    \end{align}
    Therefore, $$f'(x)=g(x)/x^2\leq g(0)/x^2=0.$$
    This means $f(x)\geq f(u_0)$ for any $x\in(0, u_0]$. Therefore, $f(u_t)\geq f(u_0)$ for any $t$. 
\end{proof}


\end{document}